\theoremstyle{plain}
\newtheorem{theorem}{Theorem}
\newtheorem{lemma}{Lemma}
\theoremstyle{definition}
\newtheorem{remark}{Remark}
\newtheorem*{remark*}{Remark}
\newcommand \E[1]{\mathbb{E}[#1]}
\newcommand{\Perp}{\perp}
\newcommand{\argmax}{\mathop{\arg\max}}
\newcommand{\argmin}{\mathop{\arg\min}}
\newcommand{\diverge}{\to\infty}
\newcommand{\iiddistr}{{\stackrel{\text{\iid}}{\sim}}}
\newcommand{\zeros}{\mathbf 0}
\newcommand{\reals}{{\mathbb{R}}}
\newcommand{\naturals}{{\mathbb{N}}}
\newcommand{\eexp}{{\rm e}}
\newcommand{\identity}{\mathbf I}
\newcommand{\allones}{\mathbf J}
\newcommand{\diff}{{\rm d}}
\newcommand{\Expect}{\mathbb{E}}
\newcommand{\expect}[1]{\mathbb{E}\left[ #1 \right]}
\newcommand{\eexpect}[1]{\mathbb{E}[ #1 ]}
\newcommand{\Prob}{\mathbb{P}}
\newcommand{\pprob}[1]{ \mathbb{P}\{ #1 \} }
\newcommand{\prob}[1]{ \mathbb{P}\left\{ #1 \right\} }
\newcommand{\Bern}{{\rm Bern}}
\newcommand{\Binom}{{\rm Binom}}
\newcommand{\eg}{e.g.\xspace}
\newcommand{\ie}{i.e.\xspace}
\newcommand{\iid}{i.i.d.\xspace}
\newcommand{\pth}[1]{\left( #1 \right)}
\newcommand{\sth}[1]{\left\{ #1 \right\}}
\newcommand{\iprod}[2]{\left \langle #1, #2 \right\rangle}
\newcommand{\Iprod}[2]{\langle #1, #2 \rangle}
\newcommand{\indc}[1]{{\mathbf{1}_{\left\{{#1}\right\}}}}
\newcommand{\diag}[1]{\mathsf{diag} \left\{ {#1} \right\} }
\newcommand{\tY}{{\widetilde{Y}}}
\newcommand{\tZ}{{\widetilde{Z}}}
\newcommand{\calG}{{\mathcal{G}}}
\newcommand{\calS}{{\mathcal{S}}}
\newcommand{\calY}{{\mathcal{Y}}}
\newcommand{\calZ}{{\mathcal{Z}}}
\newcommand{\ML}{{\rm ML}\xspace}
\newcommand{\SDP}{{\rm SDP}\xspace}
\newcommand{\ER}{Erd\H{o}s-R\'enyi\xspace}
\newcommand{\Tr}{\mathsf{Tr}}
\begin{document}

\title{Achieving Exact Cluster Recovery Threshold via Semidefinite Programming}

\date{\today}

\author{Bruce Hajek \and Yihong Wu \and Jiaming Xu\thanks{
This paper was accepted to IEEE Transactions on Information Theory on January 3, 2016.
The material in this paper was presented in part  at the IEEE International Symposium on Information Theory, Hong Kong, June, 2015 \cite{HWX14b-isit}.} 
\thanks{This research was supported by the National Science Foundation under
Grant CCF 14-09106, IIS-1447879, NSF OIS 13-39388, and CCF 14-23088, and
Strategic Research
Initiative on Big-Data Analytics of the College of Engineering
at the University of Illinois, and DOD ONR Grant N00014-14-1-0823, and Grant 328025 from the Simons Foundation.}
\thanks{B. Hajek and Y. Wu are with
the Department of ECE and Coordinated Science Lab, University of Illinois at Urbana-Champaign, Urbana, IL, \texttt{\{b-hajek,yihongwu\}@illinois.edu}.
J. Xu is with Department of Statistics, The Wharton School, University of Pennsylvania, Philadelphia, PA, \texttt{jiamingx@wharton.upenn.edu}. }
}

\maketitle
\begin{abstract}
The binary symmetric stochastic block model deals with a random graph of $n$ vertices partitioned into
two equal-sized clusters, such that each pair of vertices is connected independently with probability $p$ within clusters and $q$
across clusters. In the asymptotic regime of $p=a \log n/n$ and $q=b \log n/n$ for fixed $a,b$ and $n\diverge$, we show that the semidefinite programming relaxation of the maximum likelihood estimator achieves the optimal threshold for exactly recovering the partition from the graph with probability tending to one, resolving a conjecture of  Abbe et al.\ \cite{Abbe14}. Furthermore, we show that the semidefinite programming relaxation also achieves the optimal recovery threshold in the planted dense subgraph model containing a single cluster of size proportional to $n$.
\end{abstract}

\section{Introduction}
The community detection problem refers to finding the underlying communities within a network using only the knowledge of the network topology \cite{Fortunato10,Newman04}.
This paper considers the following probabilistic model for generating a network with underlying community structures:
Suppose that out of a total of $ n $ vertices,
$ rK $ of them are partitioned into $ r $ {clusters} of size $ K $,
and the remaining $ n-rK $ vertices do not belong to any clusters (called outlier vertices);
a random graph $G$ is generated based on the cluster structure, where each pair of vertices is connected independently with probability $p$
if they are in the same cluster or $q$ otherwise. In particular, an outlier vertex is connected to any other vertex with probability $q.$
This random graph ensemble is known as the \emph{planted cluster model} \cite{ChenXu14}
 with parameters $ n,r,K \in \naturals$ and $p,q \in [0,1]$ such that $n \geq rK$.
In particular, we call $p$ and $q$ the in-cluster and cross-cluster edge density, respectively.
The planted cluster model encompasses several classical planted random graph models including planted clique~\cite{Alon98},
 planted coloring~\cite{alon1997coloring3},
 planted dense subgraph~\cite{arias2013community}, planted partition~\cite{Condon01}, and the stochastic block model~\cite{Holland83},
which have been widely used for studying the community detection and graph partitioning problem (see, e.g.,~\cite{McSherry01,Decelle11,Mossel12,Chen12}
and the references therein). 

In this paper, we focus on the following particular cases:
 \begin{itemize}
\item  Binary symmetric stochastic block model (assuming $n$ is even):
\begin{align}
r=2,\,\, K=\frac{n}{2},\,\, p=\frac{a\log n}{n},\,\, q= \frac{b \log n}{n},\,\, n \to \infty .\label{eq:SBMscaling}
\end{align}
 \item Planted dense subgraph model:
 \begin{align}
r=1,\,\,  K= \lfloor \rho n \rfloor,\,\, p=\frac{a\log n}{n},\,\, q= \frac{b \log n}{n},\,\, n \to \infty,\,\, \label{eq:PDSscaling}
 \end{align}
 \end{itemize}
 where $a \neq b$ and $0<\rho<1$ are fixed constants, and study the problem of exactly recovering the clusters
 (up to a permutation of cluster indices) from the observation of the graph $G$. 

 Exact cluster recovery under the binary symmetric stochastic block model
 is studied in \cite{Abbe14,Mossel14} and a sharp recovery threshold is found.
\begin{theorem}[\cite{Abbe14,Mossel14}]
Under the binary symmetric stochastic block model \prettyref{eq:SBMscaling}, if $( \sqrt{a}-\sqrt{b} )^2 >  2$,\footnote{If $b>0$,
$(\sqrt{a}-\sqrt{b} )^2=2$ is also sufficient for exact recovery as shown by \cite{Mossel14}. But for $b=0$, since $a=2$, the ER random graph $G(n/2, 2\log n/n)$ contains
isolated vertices with probability bounded away from zero and exact recovery is impossible.}
the clusters can be exactly recovered up to a permutation of cluster indices with probability converging to one; if $( \sqrt{a}-\sqrt{b} )^2< 2$,
no algorithm can exactly recover the clusters with probability converging to one.
\label{thm:optimalSBM}
\end{theorem}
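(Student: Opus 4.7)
The result splits into two halves: achievability for $(\sqrt{a}-\sqrt{b})^{2}>2$ and impossibility for $(\sqrt{a}-\sqrt{b})^{2}<2$. For impossibility I would use a genie-aided reduction to a binomial two-sample test, controlled tightly via a Chernoff (R\'enyi-$1/2$) estimate. For achievability I would analyze the ML estimator directly through a first-moment union bound over balanced labelings at each Hamming distance from the truth.

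\textbf{Impossibility.} Fix a pair $(u,v)$ in opposite clusters and let a genie reveal every label except those of $u,v$; the remaining task is the binary hypothesis test between $\sigma^{*}$ and the labeling with $u,v$ swapped, whose Bayes-optimal solver is a likelihood-ratio test comparing the numbers of neighbors each of $u,v$ has in each of the two clusters. A Chernoff computation on this two-sample binomial problem (minimizing $p\,e^{t}+q\,e^{-t}$ over $t$) yields a single-vertex failure probability $n^{-(\sqrt{a}-\sqrt{b})^{2}/2+o(1)}$. When $(\sqrt{a}-\sqrt{b})^{2}<2$ this exceeds $1/n$, so the expected number of ``confusable'' vertices diverges. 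A second-moment computation on this count, whose constituent events are only weakly dependent, then shows that with probability $1-o(1)$ at least one vertex is genuinely misclassified by every estimator.

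\textbf{Achievability.} Write $\sigma^{*}\in\{\pm1\}^{n}$ for the balanced ground truth and $\widehat\sigma_{\rm ML}$ for the ML estimator over balanced labelings. For a competitor $\sigma$ at Hamming distance $2k$, the log-likelihood difference $L(\sigma)-L(\sigma^{*})$ is a signed sum over the $2k(n-2k)$ pairs whose ``in-cluster vs cross-cluster'' status flips between $\sigma^{*}$ and $\sigma$; bounding each term by its Bhattacharyya coefficient gives
\[
\Prob\{L(\sigma)\ge L(\sigma^{*})\}\;\le\;\exp\!\bigl(-\,k(n-2k)\cdot\tfrac{(\sqrt{a}-\sqrt{b})^{2}\log n}{n}\,(1+o(1))\bigr).
\]
Summing over the $\binom{n/2}{k}^{2}$ balanced alternatives at distance $2k$ and then over $k$, the $k=1$ term is $\Theta(n^{2})\cdot n^{-(\sqrt{a}-\sqrt{b})^{2}+o(1)}=n^{2-(\sqrt{a}-\sqrt{b})^{2}+o(1)}$, which is $o(1)$ precisely when $(\sqrt{a}-\sqrt{b})^{2}>2$; this value of $k$ is the bottleneck.

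\textbf{Main obstacle.} The principal difficulty is verifying uniformly in $k$ that the $k=1$ term really dominates. For small $k$ the combinatorial factor $\binom{n/2}{k}^{2}$ is $n^{2k(1+o(1))}$ while the Chernoff exponent is $k\,(\sqrt{a}-\sqrt{b})^{2}\log n\,(1+o(1))$, so both scale as $k\log n$ and the same threshold $(\sqrt{a}-\sqrt{b})^{2}>2$ suffices for each fixed $k$; for $k=\alpha n$ macroscopic, the entropy is $\exp(\Theta(n))$ while the exponent is $\Theta(n\log n)$, so the latter easily dominates, and the delicate work is handling the intermediate regime uniformly. A second conceptual obstacle appears on the converse: converting a divergent expected count of confusable vertices into an actual positive-probability misclassification requires a second-moment or Poisson-approximation step, since the single-vertex confusion events are correlated through the shared random graph.
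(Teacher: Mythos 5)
This statement is a cited result (Theorem~1, attributed to \cite{Abbe14,Mossel14}); the paper does not prove it, and no internal proof exists to compare against. Your proposed plan is, in substance, the argument given in \cite{Abbe14}: achievability via a first-moment union bound on the ML estimator over balanced labelings, organized by Hamming distance, with a per-pair Chernoff/Bhattacharyya bound of the form $n^{-(\sqrt{a}-\sqrt{b})^2+o(1)}$ and the $k=1$ term identified as the bottleneck; impossibility via a genie-aided reduction to a local two-sample binomial test, whose exponent $n^{-(\sqrt{a}-\sqrt{b})^2/2+o(1)}$ gives a divergent expected count of confusable vertices when $(\sqrt{a}-\sqrt{b})^2<2$, upgraded to whp failure by a second-moment argument. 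You have also correctly flagged the two real technical points: controlling the $o(1)$ errors in the Chernoff exponent uniformly across the intermediate range of $k$ (where neither the $\binom{n/2}{k}^2$ entropy nor the $k(n-2k)\cdot\frac{\log n}{n}$ exponent is in a trivially dominant regime), and the dependence between single-vertex confusion events through the shared graph on the converse side. In short, your blind reconstruction matches the known proof strategy and correctly anticipates where the work lies.
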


The optimal reconstruction threshold in \prettyref{thm:optimalSBM} is achieved by the maximum likelihood (ML) estimator,
which entails finding the minimum bisection of the graph, a problem known to be NP-hard in the worst case \cite[Theorem 1.3]{garey76}.
Nevertheless, it has been shown that the optimal recovery threshold can be attained in polynomial time
using a two-step procedure \cite{Abbe14,Mossel14}: First, apply the partial recovery algorithms
in \cite{Mossel13,Massoulie13} to
correctly cluster all but $o(n)$ vertices; Second, flip the cluster memberships of those vertices who do not agree with the
majority of their neighbors.
It remains open to find a simple direct approach to achieve the exact recovery threshold in polynomial time.
It was proved in \cite{Abbe14} that a semidefinite programming (SDP) relaxation of the ML estimator succeeds if $(a-b)^2>8(a+b)+\frac{8}{3}(a-b)$.
Backed by compelling simulation results, it was
further conjectured in \cite{Abbe14} that the SDP relaxation can achieve the
optimal recovery threshold.
In this paper, we resolve this conjecture in the positive.

In addition, we prove that the SDP relaxation achieves the optimal recovery threshold for the
planted dense subgraph model \prettyref{eq:PDSscaling} where the cluster size $K$ scales \emph{linearly} in $n$.
This conclusion is in sharp contrast to the following computational barrier established in \cite{HajekWuXu14}:
If $K$ grows and $p,q$ decay \emph{sublinearly} in $n$,
attaining the  statistical optimal recovery threshold is at least as hard as solving the planted clique problem
(See \prettyref{sec:pds} for detailed discussions).

Since the initial posting of this paper to arXiv, a number of interesting papers have been posted, some
extending or improving our results. Another
resolution of the conjecture in \cite{Abbe14} was  given in \cite{Bandeira15} independently.
A sharp characterization of the threshold for exact recovery for a general class of stochastic block models is derived in \cite{AbbeSandon15},
which includes the two cases considered in this paper as special cases.
Extensions of this paper appear in \cite{HajekWuXuSDP15},
showing SDP provides exact recovery up to the information theoretic threshold for a fixed number of equal sized clusters
or two unequal sized clusters.  More recently, the preprint \cite{ABBK} shows similar optimality results of SDP for
$o(\log n)$ number of equal-sized clusters and  \cite{perry2015semidefinite} establishes optimality results of SDP for
a fixed number of  clusters with unequal  sizes.

\paragraph{Notation}
Let $A$ denote the adjacency matrix of the graph $G$, $\identity $ denote the identity matrix,
and $\allones$ denote the all-one matrix.
We write  $X \succeq 0$ if $X$ is positive semidefinite and $X \ge 0$ if all the entries of $X$ are non-negative.
Let $\calS^n$ denote the set of all $n \times n$ symmetric matrices. For $X \in \calS^n$, let $\lambda_2(X)$ denote its second smallest eigenvalue.
For any matrix $Y$, let $\|Y\|$ denote its spectral norm.
For any positive integer $n$, let $[n]=\{1, \ldots, n\}$.
For any set $T \subset [n]$, let $|T|$ denote its cardinality and $T^c$ denote its complement.
We use standard big $O$ notations,
e.g., for any sequences $\{a_n\}$ and $\{b_n\}$, $a_n=\Theta(b_n)$ or $a_n  \asymp b_n$
if there is an absolute constant $c>0$ such that $1/c\le a_n/ b_n \le c$.
Let $\Bern(p)$ denote the Bernoulli distribution with mean $p$ and
$\Binom(n,p)$ denote the binomial distribution with $n$ trials and success probability $p$.
All logarithms are natural and we use the convention $0 \log 0=0$.

\section{Stochastic block model}\label{sec:SBM}
The cluster structure  under the binary symmetric stochastic block model can be represented by a vector
$\sigma \in \{\pm 1\}^n$ such that $\sigma_i=1$ if vertex $i$ is
in the first cluster and $\sigma_i=-1$ otherwise. Let $\sigma^\ast$ correspond to the true clusters.
Then the \ML estimator of $\sigma^\ast$  for the case $a>b$ can be simply stated as
\begin{align}
\max_{\sigma}  & \; \sum_{i,j} A_{ij} \sigma_i \sigma_j \nonumber  \\
\text{s.t.	}  & \; \sigma_i \in \{\pm1\}, \quad i \in [n] \nonumber \\
 & \; \sigma^\top \mathbf{1} =0,
 \label{eq:SBMML1}
\end{align}
which maximizes the number of in-cluster edges minus the number of out-cluster edges. This is equivalent to solving the NP-hard minimum graph bisection problem.
Instead, let us consider its convex relaxation similar to the
\SDP relaxation studied in \cite{Goemans95,Abbe14}.
Let $Y=\sigma \sigma^\top$. Then $Y_{ii}=1$ is equivalent to $\sigma_i = \pm 1$ and $\sigma^\top \mathbf{1}=0$
if and only if $\Iprod{Y}{\allones}=0$. Therefore, \prettyref{eq:SBMML1} can be recast as
\begin{align}
\max_{Y,\sigma}  & \; \Iprod{A}{Y} \nonumber  \\
\text{s.t.	} & \; Y=\sigma \sigma^\top  \nonumber  \\
& \;  Y_{ii} =1, \quad i \in [n]\nonumber \\
& \;  \Iprod{\allones}{Y} =0 . \label{eq:SBMML2}
\end{align}
Notice that the matrix $Y=\sigma \sigma^\top$ is a rank-one positive semidefinite matrix. If we relax this
condition by dropping the rank-one restriction, we obtain the following convex relaxation of \prettyref{eq:SBMML2},
which is a semidefinite program:
\begin{align}
\widehat{Y}_{\SDP} = \argmax_{Y}  & \; \langle A, Y \rangle \nonumber  \\
\text{s.t.	} & \; Y \succeq 0  \nonumber \\
 & \;  Y_{ii} =1, \quad i \in [n] \nonumber \\
 & \; \Iprod{\allones}{Y} =0. \label{eq:SBMconvex}
\end{align}
We remark that \prettyref{eq:SBMconvex} does not rely on any knowledge of the model parameters except that $a > b$;
for the case $a<b$, we replace $\argmax$ in \prettyref{eq:SBMconvex} by $\argmin$. The SDP formulation introduced in \cite{Abbe14} is
slightly different from ours: it did not impose the constraint $\Iprod{\allones}{Y} =0$ and the objective function is the inner product
between a \emph{weighted} adjacency matrix and $Y.$

Let $Y^\ast=\sigma^\ast (\sigma^\ast)^\top$ and $\calY_n \triangleq \{\sigma \sigma^\top: \sigma \in \{-1,1\}^n, \sigma^\top \mathbf{1} =0 \}$.
The following result establishes the optimality of the \SDP procedure:
\begin{theorem}\label{thm:SBMSharp}
If $ ( \sqrt{a}-\sqrt{b} )^2> 2$, then $\min_{Y^* \in \calY_n} \pprob{\widehat{Y}_{\SDP}=Y^\ast} = 1 - n^{-\Omega(1)}$ as $n \to \infty$.
\end{theorem}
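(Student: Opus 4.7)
The plan is to produce a dual certificate witnessing that $Y^*$ is the unique optimum of \prettyref{eq:SBMconvex}. By the KKT conditions, it suffices to exhibit a diagonal matrix $D^*$ and a scalar $\mu^*$ such that the slack matrix
\[
 S \;:=\; D^* - A - \mu^* \allones
\]
satisfies $S \succeq 0$, $S\sigma^* = 0$, and $\lambda_2(S) > 0$. Given such $S$, for any feasible $Y$ one has $\Iprod{A}{Y} = \Tr(D^*) - \Iprod{S}{Y} \le \Tr(D^*) = \Iprod{A}{Y^*}$; moreover $\Iprod{S}{Y} = 0$ forces $SY = 0$, and the condition $\lambda_2(S) > 0$ then forces $Y$ to be a multiple of $\sigma^*(\sigma^*)^\top$, which together with $Y_{ii}=1$ pins $Y = Y^*$.

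I would set $D^*_{ii} = \sum_j A_{ij}\sigma^*_i \sigma^*_j$ (the signed degree of vertex $i$ with respect to the true partition), which together with the balance $\allones\sigma^* = 0$ guarantees $S\sigma^* = 0$ for every choice of $\mu^*$. Choosing $\mu^* = -(p+q)/2$ has the pleasant effect that on $\{\sigma^*\}^\perp$ the deterministic matrix $\E{S}$ acts as the scalar $\tfrac{n(p-q)}{2}$: one verifies this by decomposing along $\mathbf{1}$, $\sigma^*$, and $\{\mathbf{1},\sigma^*\}^\perp$ and using the explicit spectrum of $\E{A} = \tfrac{p+q}{2}(\allones - \identity) + \tfrac{p-q}{2}(\sigma^*(\sigma^*)^\top - \identity)$. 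Consequently, for any unit $v \perp \sigma^*$,
\[
 v^\top S v \;\ge\; \tfrac{n(p-q)}{2} + \min_i\bigl(D^*_{ii} - \E{D^*_{ii}}\bigr) - \|A - \E{A}\|,
\]
and since $\E{D^*_{ii}} = \tfrac{n(p-q)}{2} - p$, it is enough to show $\min_i D^*_{ii} > \|A - \E{A}\| - p$ with probability $1 - n^{-\Omega(1)}$.

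Two concentration estimates then close the argument. The spectral bound $\|A - \E{A}\| = O(\sqrt{\log n})$ with high probability is standard for an inhomogeneous \ER graph of edge density $\Theta(\log n/n)$, e.g., via matrix Bernstein or Feige--Ofek. The crux is a sharp Chernoff bound for the signed degree: writing $D^*_{ii} = X - Y$ with independent $X \sim \Binom(n/2-1, p)$ and $Y \sim \Binom(n/2, q)$ and optimizing $\Prob{D^*_{ii} \le t} \le e^{\theta t}\E{e^{-\theta X}}\E{e^{\theta Y}}$ at the saddle point $\theta = \tfrac{1}{2}\log(a/b)$ yields
\[
 \Prob{D^*_{ii} \le C\sqrt{\log n}} \;\le\; n^{-(\sqrt{a}-\sqrt{b})^2/2 + o(1)},
\]
the prefactor $e^{\theta t}$ being subpolynomial in $n$ for $t = O(\sqrt{\log n})$. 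A union bound over $i \in [n]$ then gives $\Prob{\min_i D^*_{ii} \le C\sqrt{\log n}} \le n^{1-(\sqrt{a}-\sqrt{b})^2/2 + o(1)} = n^{-\Omega(1)}$ precisely under the hypothesis $(\sqrt{a}-\sqrt{b})^2 > 2$.

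The main obstacle is extracting the sharp exponent $(\sqrt{a}-\sqrt{b})^2/2$ from the moment generating functions of two binomials with very different Bernoulli parameters, and verifying that the small shift $t = O(\sqrt{\log n})$ needed to dominate the spectral noise does not degrade the leading-order rate; a slack by any constant factor here would destroy the sharpness of the threshold. Everything else --- the spectral computation of $\E{S}$ on $\{\sigma^*\}^\perp$ and the off-the-shelf bound on $\|A - \E{A}\|$ --- is essentially routine.
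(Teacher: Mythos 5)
Your proposal is correct and follows essentially the same route as the paper: the same dual certificate $S^* = D^* - A + \lambda^*\allones$ with $d_i^* = \sum_j A_{ij}\sigma_i^*\sigma_j^*$ and $\lambda^* = (p+q)/2$, the same reduction to $\min_i d_i^* + p > \|A-\E{A}\|$ on $\{\sigma^*\}^\perp$, and the same two concentration inputs — a spectral bound $\|A-\E{A}\| = O(\sqrt{\log n})$ (the paper's Theorem~\ref{thm:adjconcentration}) and a sharp Chernoff tail for the signed degree with exponent $(\sqrt a - \sqrt b)^2/2$ (the paper cites this as Lemma~\ref{lmm:binomialconcentration} from \cite{Abbe14}, while you sketch its saddle-point derivation at $\theta = \tfrac12\log(a/b)$). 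The only omission relative to the paper is the parallel treatment of $a < b$, where $\argmax$ is replaced by $\argmin$; the argument there is symmetric.
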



\begin{remark}
It is worthy to note the relationship between our SDP relaxation and other related formulations that appeared previously in the literature.
In fact, \prettyref{eq:SBMconvex} coincides with the SDP studied in \cite[p. 659]{FK01} for MIN BISECTION, where a sufficient condition is obtained in
\cite[Lemma 19]{FK01} that is not optimal.
	The SDP relaxation considered in \cite[Equation (15)]{FriezeJerrum97} for MAX BISECTION is equivalent to \prettyref{eq:SBMconvex} with max replaced by min (used when $a<b$) and $\Iprod{\allones}{Y} =0$ 	by $\Iprod{\allones}{Y}  \leq 0$.	 The proof of \prettyref{thm:SBMSharp} shows that this more relaxed version
	also works. 	
Finally, we note that the SDP used in \cite{Abbe14} can be viewed as a penalized version of \prettyref{eq:SBMconvex} with $-\frac{1}{2} \Iprod{\allones}{Y}$ added to the objective function.
	\label{rmk:sdp}
\end{remark}

\begin{remark}
\prettyref{thm:SBMSharp} naturally extends to the \emph{semirandom model} considered in \cite{FK01}, where after a graph is instantiated from the SBM, a \emph{monotone adversary} can add in-cluster edges and delete cross-cluster edges arbitrarily. Although this process may appear to make the cluster structure more visible, such an adversarial model is known to foil many procedures based on degrees, local search or graph spectrum \cite{FK01}. By design, the SDP \prettyref{eq:SBMconvex} enjoys robustness against such a monotone adversary, which has already been observed in \cite{FK01} and proved in \cite[Lemma 1]{Chen12}. More specifically, let $\widetilde{A}$ denote the adjacency matrix of the altered graph. Whenever $Y^\ast$ is the unique maximizer of \prettyref{eq:SBMconvex}, \ie, $\Iprod{A}{Y^*} > \Iprod{A}{Y}$ for any other feasible $Y$, then $Y^*$ is also the unique maximizer of \prettyref{eq:SBMconvex} with $A$ replaced by $\widetilde{A}$.
To see this, note that, by Cauchy-Schwartz inequality, $Y \succeq 0$ and $Y_{ii}=1$ imply that $|Y_{ij}| \leq 1$ for all $i,j$. Consequently, $\Iprod{\widetilde{A}-A}{Y} \leq \sum |\widetilde{A}_{ij}-A_{ij}| = \Iprod{\widetilde{A}-A}{Y^*}$. Then
$\Iprod{\widetilde{A}}{Y} = \Iprod{A}{Y} + \Iprod{\widetilde{A}-A}{Y} < \Iprod{A}{Y^*} + \Iprod{\widetilde{A}-A}{Y^*} =  \Iprod{\widetilde{A}}{Y^*}$, establishing the unique optimality of $Y^*$.
 	\label{rmk:semirandom}
\end{remark}

\section{Planted dense subgraph model}\label{sec:pds}

In this section we turn to the planted dense subgraph model in the asymptotic regime \prettyref{eq:PDSscaling}, where there exists a single cluster of size $\lfloor \rho n \rfloor$. To specify the optimal reconstruction threshold, define the following function: For $a,b \geq 0$, let
\begin{align}
f(a,b) =\left\{
 \begin{array}{rl}
   a - \tau^\ast \log \frac{\eexp a}{\tau^\ast}  & \text{if } a, b>0, a \neq b\\
   a & \text{if } b=0 \\
    b & \text{if }  a=0 \\
    0 & \text{if } a=b
    \end{array} \right., \label{eq:definitionthreshold}
\end{align}
where $\tau^\ast\triangleq\frac{a-b}{\log a- \log b}$ if $a,b>0$ and $ a \neq b$.
We show that if $\rho f(a,b)>1$, exact recovery is achievable in polynomial-time
via \SDP with probability tending to one; if $\rho f(a,b)<1$, any estimator fails to recover the cluster with probability tending to one regardless of the computational costs.
The sharp threshold $\rho f(a,b)=1$ is plotted in Fig.~\ref{fig:phasetransition} for various values of $\rho$.
\begin{figure}[ht]
\centering
\includegraphics[width=4in]{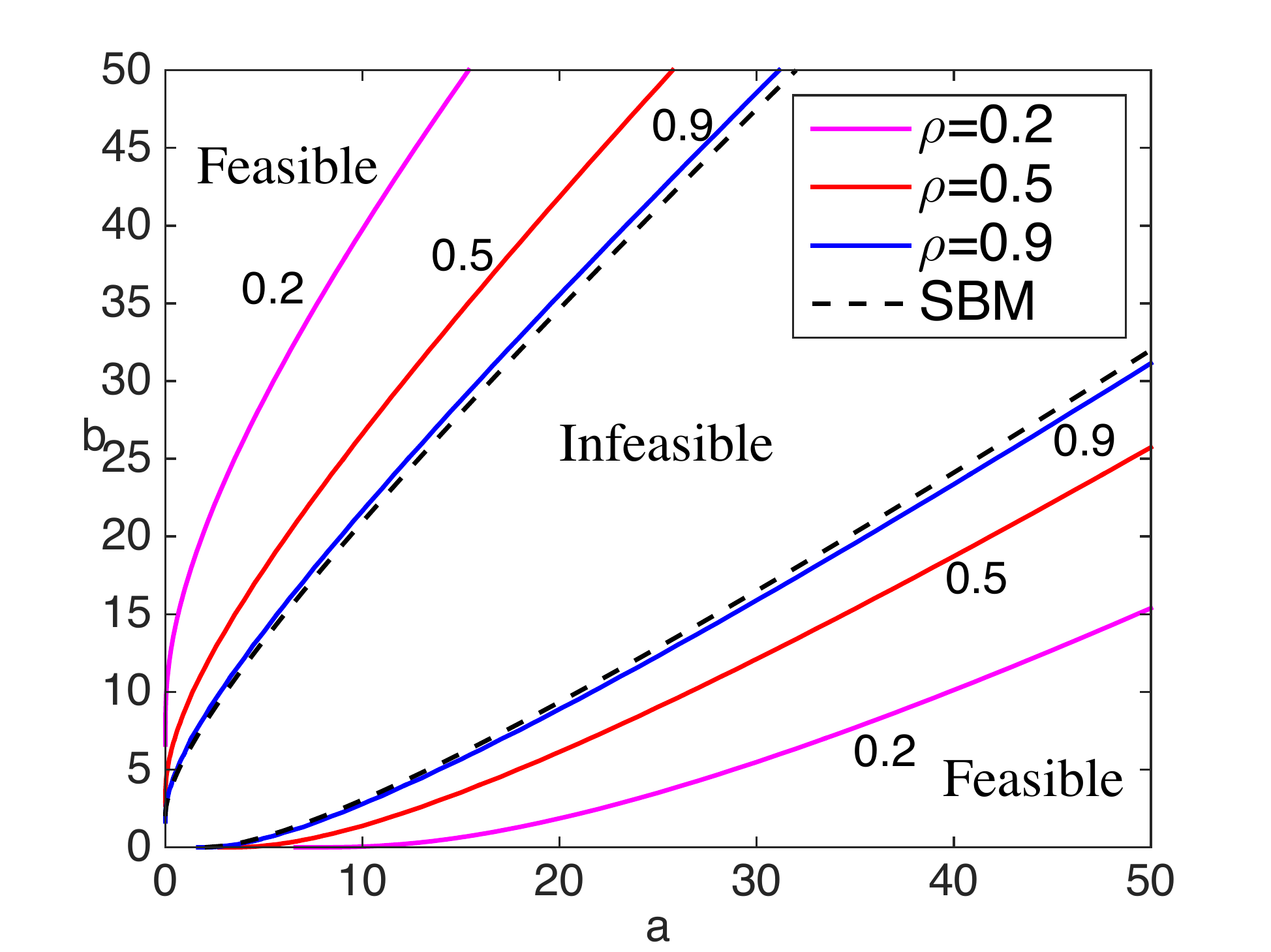}
\caption{The solid curves show the recovery threshold $ \{(a,b): \rho f(a,b) = 1\}$ for the planted dense subgraph model \prettyref{eq:PDSscaling} for three values of $\rho.$
For each $\rho$ there are two curves; recovery is not possible for $(a,b)$ in the open region between the curves and recovery is possible by the SDP for $(a,b)$
in the open region outside the two curves.
Similarly, the two dashed curves correspond to the recovery threshold $\{(a,b): (\sqrt{a} - \sqrt{b})^2= 2\}$ for the stochastic block model \prettyref{eq:SBMscaling}.}
\label{fig:phasetransition}
\end{figure}

We first introduce the maximum likelihood estimator and its convex relaxation.
For ease of notation, in this section we use a  vector $\xi \in \{0,1\}^n$, as opposed to $\sigma \in \{\pm 1\}^n$ used in \prettyref{sec:SBM} for the SBM, as the indicator function of the cluster, such that $\xi_i=1$ if vertex $i$ is in the cluster and $\xi_i=0$ otherwise.
Let $\xi^\ast$ be the indicator of the true cluster.
Assuming $a>b$, \ie, the vertices in the cluster are more densely connected, the \ML estimation of $\xi^\ast$ is simply
\begin{align}
\max_{\xi}  & \; \sum_{i,j} A_{ij} \xi_i \xi_j \nonumber  \\
\text{s.t.	}  & \; \xi \in \{ 0, 1 \}^n \nonumber \\
 & \; \xi^\top \mathbf{1} =K,
 \label{eq:PDSML1}
\end{align}
which maximizes the number of in-cluster edges.
Due to the integrality constraints, it is computationally difficult to solve  \prettyref{eq:PDSML1}, which prompts us to consider its convex relaxation.
Note that \prettyref{eq:PDSML1} can be equivalently\footnote{Here \prettyref{eq:PDSML1} and \prettyref{eq:PDSML2} are equivalent in the following sense: for any feasible $\xi$ for \prettyref{eq:PDSML1},
$Z=\xi\xi^\top$ is feasible for \prettyref{eq:PDSML2}; for any feasible $Z, \xi$ for \prettyref{eq:PDSML2}, either $\xi$ or $-\xi$
is feasible for \prettyref{eq:PDSML1}. } formulated  as
\begin{align}
\max_{Z,\xi}  & \; \Iprod{A}{Z} \nonumber  \\
\text{s.t.	} & \; Z=\xi \xi^\top  \nonumber  \\
& \;  Z_{ii} \le 1, \quad \forall i \in [n]\nonumber \\
 & \; Z_{ij} \ge 0,  \quad \forall i, j \in [n]  \nonumber \\
 & \;  \Iprod{\identity}{Z} = K \nonumber  \\
 & \; \Iprod{\allones}{Z} = K^2,
\label{eq:PDSML2}
\end{align}
where the matrix $Z=\xi \xi^\top$ is positive semidefinite and rank-one. Removing the rank-one restriction leads to the following convex relaxation of \prettyref{eq:PDSML2},
which is a semidefinite program.
\begin{align}
\widehat{Z}_{\SDP} = \argmax_{Z}  & \langle A, Z \rangle  \nonumber \\
\text{s.t.	} & \; Z \succeq 0   \nonumber \\
& \;  Z_{ii} \le 1, \quad \forall i \in [n]\nonumber \\
 & \; Z_{ij} \ge 0,  \quad \forall i, j \in [n]  \nonumber \\
& \;  \Iprod{\identity}{Z} = K  \nonumber  \\
 & \; \Iprod{\allones}{Z} = K^2. \label{eq:PDSCVX}
\end{align}
We note that, apart from the assumption that $a > b$, the only model parameter needed by the estimator \prettyref{eq:PDSCVX} is the cluster size $K$;
for the case $a<b$, we replace  $\argmax$ in \prettyref{eq:PDSCVX} by $\argmin$.

Let $Z^*=\xi^*(\xi^*)^\top$ correspond to the true cluster and define $\calZ_n = \big\{\xi \xi^\top: \xi \in \{ 0, 1 \}^n , \xi^\top \mathbf{1} =K\big\}$. The recovery threshold for the SDP \prettyref{eq:PDSCVX} is given as follows.
\begin{theorem}\label{thm:PlantedSharp}
Under the planted dense subgraph model \prettyref{eq:PDSscaling},  if
\begin{align}
\rho f(a,b)>1,
\label{eq:planteddensesubgraphthreshold}
\end{align}
then $\min_{Z^*\in \calZ_n} \pprob{\widehat{Z}_{\SDP}=Z^\ast} =  1 - n^{-\Omega(1)}$ as $n \to \infty$.
\end{theorem}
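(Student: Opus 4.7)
I propose to prove \prettyref{thm:PlantedSharp} by exhibiting a dual certificate that validates $Z^\ast = \xi^\ast(\xi^\ast)^\top$ as the unique maximizer of \prettyref{eq:PDSCVX} with probability $1 - n^{-\Omega(1)}$. Let $C^\ast \subset [n]$ denote the support of $\xi^\ast$. The Lagrangian of \prettyref{eq:PDSCVX} introduces dual variables $d^\ast \in \reals_+^n$ (for $Z_{ii}\le 1$), a symmetric nonnegative $B^\ast$ (for $Z_{ij}\ge 0$), $\lambda^\ast \in \reals$ (for $\Tr Z = K$), and $\eta^\ast \in \reals$ (for $\Iprod{\allones}{Z} = K^2$), and the optimality of $Z^\ast$ is certified by a dual slack matrix
\[
S^\ast \triangleq \diag{d^\ast} + \lambda^\ast \identity + \eta^\ast \allones - A - B^\ast
\]
satisfying $S^\ast \succeq 0$, $S^\ast \xi^\ast = 0$, together with complementary slackness $d^\ast_i = 0$ for $i \notin C^\ast$ and $B^\ast_{ij} = 0$ for $i,j \in C^\ast$. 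Writing $S^\ast \xi^\ast = 0$ row by row pins down the remaining dual variables: $d^\ast_i = d_{C^\ast}(i) - \lambda^\ast - K\eta^\ast$ for $i \in C^\ast$, with $d_{C^\ast}(i)$ the number of neighbors of $i$ in $C^\ast$; and $\sum_{j \in C^\ast} B^\ast_{ij} = K\eta^\ast - d_{C^\ast}(i)$ for $i \notin C^\ast$. The simplest symmetric assignment is $B^\ast_{ij} = \eta^\ast - d_{C^\ast}(\ell)/K$ on the cross blocks (with $\ell$ the out-of-cluster endpoint of $\{i,j\}$) and $B^\ast_{ij} = 0$ elsewhere off-diagonal. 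I would take the Chernoff-optimal threshold $\eta^\ast = \tau^\ast\log n/n$, with $\tau^\ast = (a-b)/(\log a - \log b)$, and $\lambda^\ast = C\sqrt{\log n}$ for a sufficiently large constant $C$.

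With this construction, certificate validity reduces to three events, each of which must hold with probability $1 - n^{-\Omega(1)}$: (i) $d^\ast_i \ge 0$ for all $i \in C^\ast$, i.e., $d_{C^\ast}(i) \ge K\eta^\ast + \lambda^\ast$; (ii) $B^\ast \ge 0$, i.e., $d_{C^\ast}(i) \le K\eta^\ast$ for all $i \notin C^\ast$; and (iii) $S^\ast \succeq 0$. Events (i) and (ii) follow from sharp Chernoff bounds on $\Binom(K,p)$ and $\Binom(K,q)$ and a union bound over vertices. The value $\tau^\ast$ is precisely the one that balances the two rate functions, producing the per-vertex failure exponent $\rho f(a,b)\log n - o(\log n)$; the $O(\sqrt{\log n})$ shift from $\lambda^\ast$ only perturbs the exponent sublogarithmically. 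Under the hypothesis $\rho f(a,b) > 1$, the union bound over the $O(n)$ vertices yields the required bound.

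The main challenge is (iii). Since $S^\ast \xi^\ast = 0$ by construction, it suffices to show $v^\top S^\ast v > 0$ for every $v \in \xi^{\ast\perp}$. A direct block computation gives
\[
\Expect S^\ast|_{C^\ast} = (p - \eta^\ast)(K\identity - \allones), \qquad \Expect S^\ast|_{C^{\ast c}} = (\eta^\ast - q)\allones + (q + \lambda^\ast)\identity,
\]
with vanishing cross-block expectation. The smallest eigenvalue of $\Expect S^\ast$ on $\xi^{\ast\perp}$ is therefore $\min\{K(p - \eta^\ast),\, q + \lambda^\ast\}$, which is of order $\sqrt{\log n}$ under our choice of $\lambda^\ast$. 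The fluctuation $S^\ast - \Expect S^\ast$ decomposes into three pieces: a centered adjacency $-(A - \Expect A)$ off-diagonal, whose spectral norm is $O(\sqrt{\log n})$ by the Feige--Ofek-type bound for sparse Erd\H{o}s--R\'enyi matrices; a diagonal correction on $C^\ast$ of entries $d_{C^\ast}(i) - Kp$, each of size $O(\sqrt{\log n})$ by Bernstein plus a union bound; and a rank-$2$ correction $B^\ast - \Expect B^\ast$ on the cross blocks, whose spectral norm is $O(\sqrt{\log n})$ by direct estimation. Choosing the constant $C$ in $\lambda^\ast$ large enough ensures $\Expect S^\ast \succeq \|S^\ast - \Expect S^\ast\|\cdot \identity$ on $\xi^{\ast\perp}$, yielding $S^\ast \succeq 0$ with kernel exactly $\mathrm{span}(\xi^\ast)$ and hence both feasibility and uniqueness of the certificate.

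The most delicate point is balancing the $\sqrt{\log n}$ slack required in $\lambda^\ast$ to dominate the random perturbation on the weakly curved $C^{\ast c}$-block of $\Expect S^\ast$ against the $\sqrt{\log n}$ slack consumed in (i) through the Chernoff exponent. The strict inequality $\rho f(a,b) > 1$ provides exactly the $\Omega(1)$ gap in the exponent that absorbs this $o(\log n)$ shift, yielding the claimed probability of success.
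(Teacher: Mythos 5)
Your construction of the dual certificate is, up to notation (your $(\lambda^\ast,\eta^\ast)$ are the paper's $(\eta^\ast,\lambda^\ast)$), the same as the paper's: the same complementary-slackness structure, the same rank-one form of $B^\ast$, the same choice $\tau^\ast\log n/n$ for the all-ones multiplier, and the same reduction to three events. The only stylistic difference is that you set the identity multiplier to a deterministic $C\sqrt{\log n}$ while the paper sets it to the random quantity $\|A-\Expect[A]\|$; both work for the same reason. So the overall route is the paper's route.

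There is one genuine error in the way you argue $S^\ast \succeq 0$. You claim that the diagonal correction on $C^\ast$, with entries $d_{C^\ast}(i)-Kp$, is of size $O(\sqrt{\log n})$ ``by Bernstein plus a union bound,'' and then you propose to conclude via $\Expect S^\ast \succeq \|S^\ast-\Expect S^\ast\|\,\identity$ on $\xi^{\ast\perp}$. But $d_{C^\ast}(i)\sim\Binom(K-1,p)$ has mean $\asymp\log n$, so the maximum deviation over the $K\asymp n$ vertices is $\Theta(\log n)$, not $O(\sqrt{\log n})$; Bernstein at deviation $O(\sqrt{\log n})$ gives only a constant failure probability per vertex, which cannot survive a union bound. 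Consequently $\|S^\ast-\Expect S^\ast\|$ can be $\Theta(\log n)$ through the diagonal, whereas your stated lower bound $\lambda_{\min}(\Expect S^\ast|_{\xi^{\ast\perp}})=q+\lambda^\ast\asymp\sqrt{\log n}$ is much smaller, so the two-sided spectral-norm decomposition fails as written. The fix is the one the paper uses: the diagonal enters $v^\top S^\ast v$ only through $\sum_{i\in C^\ast}(d_i^\ast+\lambda^\ast+p)v_i^2$, and large \emph{upward} fluctuations of $d_i^\ast$ can only help; the relevant \emph{lower} bound $\min_{i\in C^\ast}d_i^\ast\ge 0$ is exactly the binomial-tail event you already establish, and plugging it into the quadratic form (rather than into a norm bound) yields $v^\top S^\ast v\ge \min\{ \min_i d_i^\ast + p,\, q\}>0$. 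You have all the ingredients, but the spectral-norm argument you proposed for step (iii) does not close the loop.

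Separately, your sketch does not address the degenerate case $b=0$ (where $\tau^\ast$ is defined differently and the argument for $B^\ast\ge 0$ is trivial) or the case $a<b$, both of which the paper treats.
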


Next we prove a converse for \prettyref{thm:PlantedSharp} which shows that the recovery threshold achieved by the \SDP relaxation is in fact optimal.
\begin{theorem} \label{thm:PlantedSharpConverse}
If \begin{align}
\rho f(a,b)<1,
\label{eq:planteddensesubgraphthresholdconverse}
\end{align}
then for any sequence of estimators $\widehat{Z}_n$, $\min_{Z^*\in \calZ_n} \pprob{\widehat{Z}_n = Z^\ast } \to 0$.
\end{theorem}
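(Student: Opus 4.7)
The plan is to reduce to showing that the ML estimator on $\calZ_n$ fails with probability tending to one. For any estimator $\widehat{Z}_n$,
\[
\min_{Z^*\in \calZ_n}\Prob\{\widehat{Z}_n = Z^*\} \;\leq\; \frac{1}{|\calZ_n|}\sum_{Z^*\in \calZ_n}\Prob\{\widehat{Z}_n = Z^*\} \;\leq\; \Prob\{\widehat{Z}_{\ML} = Z^*\},
\]
the last inequality being the MAP-optimality of ML under the uniform prior on $\calZ_n$; so it suffices to show $\Prob\{\widehat{Z}_{\ML} = Z^*\}\to 0$ when $\rho f(a,b)<1$. Assume $a>b$ without loss of generality. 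A direct computation yields that for $i\in C^*$ and $j\notin C^*$ the log-likelihood ratio between $(C^*\setminus\{i\})\cup\{j\}$ and $C^*$ equals $\bigl[e(j, C^*\setminus\{i\}) - e(i,C^*\setminus\{i\})\bigr]\log\tfrac{p(1-q)}{q(1-p)}$, where $e(v,S)$ counts edges from $v$ to $S$; the coefficient being positive, ML fails to uniquely pick $C^*$ as soon as some pair $(i,j)$ satisfies $e(j,C^*)\ge e(i,C^*\setminus\{i\})+1$ (the $+1$ absorbs the possible indicator $A_{ij}$).

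Set the threshold $\tau_n = \lfloor \rho\tau^*\log n\rfloor$, with $\tau^* = (a-b)/(\log a-\log b)$. For the Cram\'er rate $E(\tau,\mu) = \tau\log(\tau/\mu)-\tau+\mu$, $\tau^*$ is the unique value where the lower-tail rate of $\Binom(K-1,p)$ and the upper-tail rate of $\Binom(K,q)$ coincide, and one checks $E(\rho\tau^*,\rho a)=E(\rho\tau^*,\rho b)=\rho f(a,b)$. Standard Chernoff/Poisson estimates then give
\[
\Prob\{e(i,C^*\setminus\{i\})<\tau_n\} \;=\; n^{-\rho f(a,b)+o(1)}, \qquad \Prob\{e(j,C^*)\ge\tau_n\} \;=\; n^{-\rho f(a,b)+o(1)}.
\]
The $(1-\rho)n$ outside variables $\{e(j,C^*):j\notin C^*\}$ involve pairwise disjoint edges and are therefore mutually independent; moreover they are independent of the within-cluster edges that govern the inside family. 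The expected number of $j$ with $e(j,C^*)\ge\tau_n$ is of order $n^{1-\rho f(a,b)}\diverge$, so Chebyshev (or Poisson convergence) produces such a $j$ w.h.p. The $K\approx \rho n$ inside variables share within-cluster edges, but the pairwise covariance of $\indc{e(i,\cdot)<\tau_n}$ is only $O(p)=O(\log n/n)$ times the marginal probability, so a second-moment argument concentrates the count around its divergent mean $\rho n^{1-\rho f(a,b)}$ and supplies some $i\in C^*$ with $e(i,C^*\setminus\{i\})<\tau_n$ w.h.p. Combining the two independent events yields the needed swap.

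The main obstacle is the variance bound for the inside family---edge sharing couples the indicators, and one must quantitatively verify that each shared edge contributes only $O(\log n/n)$ to the correlation---together with the algebraic verification that the common Chernoff rate equals $\rho f(a,b)$ as defined in \prettyref{eq:definitionthreshold}. The degenerate cases $a=b$, $a=0$, and $b=0$ reduce to elementary statements: $a=b$ makes the law of $G$ independent of $C^*$; when $b=0$ we have $e(j,C^*)\equiv 0$ and it suffices to find an $i\in C^*$ with $e(i,C^*\setminus\{i\})=0$, which exists w.h.p.\ iff $\rho a<1$ fails, matching $f(a,0)=a$; analogously for $a=0$ with the roles of the two families reversed.
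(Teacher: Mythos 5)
Your overall strategy matches the paper's: reduce to ML under a uniform prior, exhibit a ``swap'' $(i\in C^*, j\notin C^*)$ with at least as many edges into $C^*$, and show the swap-enabling events happen w.h.p.\ by matching the lower-tail rate of the inside degree and the upper-tail rate of the outside degree at the common threshold $\tau^*\rho\log n$. The algebra identifying the common Chernoff rate with $\rho f(a,b)$ is correct, and the treatment of the outside family (disjoint edge sets, hence mutually independent, hence Poissonization/Chebyshev) is exactly the paper's argument for the event $E_3$.

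Where you genuinely diverge is the inside family. The paper sidesteps dependence altogether: it picks a thin subset $T\subset C^*$ of size $\lfloor\rho n/\log^2 n\rfloor$ and decomposes $\min_{i\in C^*}e(i,C^*)\le\min_{i\in T}e(i,C^*\setminus T)+\max_{i\in T}e(i,T)$. The variables $\{e(i,C^*\setminus T):i\in T\}$ are mutually independent (disjoint edge sets into a fixed target), and $\max_{i\in T}e(i,T)$ is shown to be $o(\log n)$ w.h.p., so the independent-product bound applies cleanly to $\Prob[E_2]$. You instead run a second-moment argument directly on $N=\sum_{i\in C^*}\indc{e(i,C^*\setminus\{i\})<\tau_n}$. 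This also works, and the covariance computation you flag is manageable: for $i\neq j$ in $C^*$ the indicators share only the single edge $A_{ij}$, and conditioning on $A_{ij}$ gives $\Cov=p(1-p)(q_0-q_1)^2\le p q^2$ where $q$ is the marginal; hence $\var N/(\E N)^2\le 1/\E N+O(p)\to 0$. So both routes are sound; the paper's subset trick avoids any covariance bookkeeping at the cost of a slightly more delicate triangle-inequality decomposition, while yours is more ``standard'' but requires the covariance bound you correctly identify as the one thing to verify.

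One small slip: in the $b=0$ degenerate case you write that an isolated vertex in $C^*$ ``exists w.h.p.\ iff $\rho a<1$ fails''---this is backwards; the isolated vertex exists w.h.p.\ precisely when $\rho a<1$ (below the connectivity threshold of $\calG(\lfloor\rho n\rfloor,a\log n/n)$), which is what makes ML fail when $\rho f(a,0)=\rho a<1$. Also note that your sufficient condition $e(j,C^*)\ge e(i,C^*\setminus\{i\})+1$ only guarantees a tie (not a strict likelihood increase) when $A_{ij}=1$; the paper faces the same minor issue and it is harmless since a non-unique maximizer already defeats exact recovery, but it is worth being explicit that ties suffice.
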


Under the planted dense subgraph model, our investigation of the exact cluster recovery problem thus far in this paper has been focused on the regime where the cluster size $K$ grows \textbf{linearly} with $n$ and $p,q = \Theta(\frac{\log n}{n})$, where the statistically optimal threshold can be attained by \SDP in polynomial time.
However, this need \emph{not} be the case if $K$ grows \textbf{sublinearly} in $n$.
In fact, the exact cluster recovery problem has been studied in \cite{ChenXu14,HajekWuXu14} in the following asymptotic regime:
\begin{equation}
 K =\Theta(n^{\beta}), \; p=cq= \Theta(n^{-\alpha}), \quad n \diverge,
	\label{eq:scaling}
\end{equation}
where $c>1$ and $\alpha, \beta \in (0,1)$ are fixed constants.
The statistical and computational complexities of the cluster recovery problem depend crucially on the value of $\alpha$ and $\beta$ (see \cite[Figure 2]{HajekWuXu14} for an illustration):
\begin{itemize}
\item $\beta> \frac{1}{2}+\frac{\alpha}{2}$: the planted cluster can be perfectly recovered in polynomial-time with high probability
via the \SDP relaxation \prettyref{eq:PDSCVX}.\footnote{In fact, an even looser \SDP relaxation than \prettyref{eq:PDSCVX}
has been shown to exactly recover the planted cluster with high probability for $\beta> \frac{1}{2}+\frac{\alpha}{2}$. See \cite[Theorem 2.3]{ChenXu14}.}
\item $\frac{1}{2}+\frac{\alpha}{4}<\beta<\frac{1}{2}+\frac{\alpha}{2}$: the planted cluster
can be detected in linear time with high probability by thresholding the total number of edges, but it is conjectured
to be computationally intractable to exactly recover the planted cluster.
\item $\alpha<\beta <\frac{1}{2}+\frac{\alpha}{4}$: the planted cluster can be exactly recovered with high probability
via \ML estimation; however, no randomized polynomial-time solver exists conditioned on the planted clique hardness hypothesis.\footnote{Here the planted clique hardness hypothesis refers to the statement that for any fixed constants $\gamma>0$ and $\delta>0$, there exist no randomized polynomial-time tests to distinguish an \ER random graph $\calG(n,\gamma)$ and a planted clique model
which is obtained by adding edges to $k=n^{1/2-\delta}$ vertices chosen uniformly from $\calG(n,\gamma)$ to form a clique. For various hardness results of problems reducible from the planted clique problem, see \cite{HajekWuXu14} and the references within.}
\item $\beta<\alpha$: regardless of the computational costs, no algorithm can exactly recover the planted cluster with vanishing probability of error.
\end{itemize}
Consequently, assuming the planted clique hardness hypothesis, in the asymptotic regime of \prettyref{eq:scaling} when $\alpha \in (0,\frac{2}{3})$ (and, quite possibly, the entire range $(0,1)$), there exists a significant gap between the information limit (recovery threshold of the optimal procedure) and the computational
limit (recovery threshold for polynomial-time algorithms). In contrast, in the asymptotic regime of \prettyref{eq:PDSscaling}, the computational constraint imposes no penalty on the statistical performance, in that the optimal threshold can be attained by \SDP relaxation in view of
\prettyref{thm:PlantedSharp}.

\section{Proofs}
In this section, we give the proofs of our main theorems.
Our analysis of the \SDP relies on two key ingredients: the spectrum
of \ER random graphs and the tail bounds for the binomial distributions,
which we first present.
\subsection{Spectrum of \ER random graph }
Let $A$ denote the adjacency matrix of an \ER random graph $G$, where vertices $i$ and $j$ are connected independently with probability $p_{ij}$.
Then $\expect{A_{ij}}=p_{ij}$. Let $p=\max_{ij} p_{ij}$ and assume $p \ge c_0 \frac{\log n}{n}$
for any constant $c_0>0$.
We aim to show that $\| A-\expect{A}\|\le c' \sqrt{np}$ with high
probability for some constant $c'>0$.
To this end, we establish the following more general result where the entries need not be binary-valued.

\begin{theorem}\label{thm:adjconcentration}
 Let $A$ denote a symmetric and zero-diagonal random matrix, where the entries $\{A_{ij}: i<j\}$ are independent and $[0,1]$-valued.
 Assume that $\expect{A_{ij}} \le p$, where $ c_0 \log n /n \le p \le 1-c_1$ for arbitrary constants $c_0>0$ and $c_1>0$.
  Then for any $c>0$, there exists $c'>0$ such that for any $n \geq 1$, $\pprob{\| A-\expect{A} \| \le c' \sqrt{n p}} \geq 1-n^{-c}$.
\end{theorem}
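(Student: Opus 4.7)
The plan is to use the Feige--Ofek style combinatorial argument for the spectral norm of sparse random matrices, which is the approach needed to avoid the spurious $\sqrt{\log n}$ factor that matrix Bernstein would incur at the sparsity level $p \asymp \log n/n$. Let $M = A - \mathbb{E}[A]$; since $M$ is symmetric, $\|M\| = \sup_{\|x\|=1} |x^\top M x|$. By a standard discretization, it suffices to bound $|x^\top M y|$ uniformly over $(x,y) \in T \times T$, where $T$ is a net of vectors with coordinates in $\{0\} \cup \{\pm 2^{-k}/\sqrt{n} : 0 \leq k \leq \tfrac{1}{2}\log_2 n\}$; such a net has size $\exp(O(n))$ and recovers $\|M\|$ up to a constant factor.

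For each fixed pair $(x,y)$, I would split $x^\top M y = L + H$, where $L$ collects the \emph{light} contributions $\{(i,j): |x_i y_j| \leq \sqrt{p/n}\}$ and $H$ the \emph{heavy} ones. The light part is controlled by Bernstein's inequality: each summand $x_i M_{ij} y_j$ has mean zero, is bounded by $\sqrt{p/n}$, and the variance $\sum x_i^2 y_j^2 \operatorname{Var}(A_{ij})$ is at most $p$ (here $p \leq 1-c_1$ keeps variances comparable to $p$). Bernstein then gives $|L| \lesssim \sqrt{np}$ except with probability $\exp(-c_2 n)$, which beats the net cardinality after the union bound.

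For the heavy part, I would first establish a \textbf{discrepancy property} of the random matrix: with probability at least $1 - n^{-c-1}$, for every $S, T \subseteq [n]$ the edge mass $e(S,T) := \sum_{i \in S, j \in T} A_{ij}$ satisfies either $e(S,T) \leq C p|S||T|$, or $e(S,T) \log \frac{e(S,T)}{p|S||T|} \leq C(|S|\vee|T|)\log\frac{en}{|S|\vee|T|}$. This is proved by a union bound over the $\binom{n}{|S|}\binom{n}{|T|}$ choices of $(S,T)$ combined with Chernoff bounds for sums of bounded independent variables, using Stirling to absorb the binomial coefficients---the hypothesis $p \geq c_0 \log n / n$ is exactly what makes the two regimes dovetail. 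Given the discrepancy event, one partitions the heavy coordinates into $O(\log^2 n)$ dyadic blocks according to the magnitudes of $|x_i|$ and $|y_j|$, and sums the discrepancy bound over these blocks, verifying that the total is $O(\sqrt{np})$.

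The main obstacle is the heavy-pair analysis: book-keeping the dyadic decomposition so that neither the ``linear'' regime $Cp|S||T|$ nor the ``entropy'' regime $C|T|\log(en/|T|)$ produces a stray $\sqrt{\log n}$ factor. The delicate point is that in the entropy regime, the bound is non-trivial because $e(S,T)$ may substantially exceed its expectation, but the factor $\log\frac{e(S,T)}{p|S||T|}$ in the denominator precisely compensates the gain in $|x_i y_j|$ over the light threshold $\sqrt{p/n}$. Once both parts yield $O(\sqrt{np})$, choosing $c'$ large enough in terms of $c$, $c_0$, $c_1$ and taking a final union bound over the net and over the discrepancy event completes the proof.
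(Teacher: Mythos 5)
Your proposal follows the Feige--Ofek combinatorial strategy, whereas the paper takes a substantially different and more modular route. The paper first reduces the general $[0,1]$-valued matrix with heterogeneous means to the homogeneous Bernoulli case $\calG(n,p)$ via a second-order stochastic comparison argument (the convex-ordering lemma of Tomozei--Massouli\'e; this is where the hypothesis $p \le 1-c_1$ enters, to control the factor $\frac{1}{1-p}$). It then uses a standard symmetrization chain to pass to a matrix with i.i.d.\ symmetric three-point entries, invokes Seginer's theorem to bound the expected spectral norm by $\kappa\,\expect{\max_j \sum_i E_{ij}^2}^{1/2}$, controls that maximum with a Chernoff bound, and finally upgrades the expectation bound to a high-probability bound via Talagrand's concentration inequality for Lipschitz convex functions. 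In fact the paper explicitly acknowledges that strengthening the Feige--Ofek discrepancy argument is a viable alternative; your sketch is precisely that alternative.

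Both approaches deliver the sharp $O(\sqrt{np})$ rate at $p = \Theta(\log n/n)$, avoiding the spurious $\sqrt{\log n}$ of matrix Bernstein, but they distribute the technical burden differently. Your light/heavy split with the two-regime discrepancy property is self-contained but the bookkeeping in the heavy-pair dyadic decomposition is genuinely delicate, particularly in ensuring the $O(\log^2 n)$ blocks sum to $O(\sqrt{np})$ and that the argument degrades gracefully as $c_0 \to 0$ (the original Feige--Ofek statement requires $c_0$ large). The paper's route outsources exactly this combinatorics to Seginer's theorem, at the cost of needing the stochastic-comparison reduction to the Bernoulli case, which is why the hypothesis $p \le 1-c_1$ appears; in your approach that hypothesis is superfluous, since $[0,1]$-valued variables automatically have variance at most their mean, so the parenthetical ``here $p \leq 1-c_1$ keeps variances comparable to $p$'' is not actually needed and slightly misdiagnoses where that assumption is used in the paper.
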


Let $\calG(n,p)$ denote the \ER random graph model with the edge probability $p_{ij}=p$ for all $i,j$.
Results similar to \prettyref{thm:adjconcentration} have been obtained in \cite{Feige05} for the special case of $\calG(n, \frac{c_0\log n}{n})$
for some \emph{sufficiently large} $c_0$. In fact, \prettyref{thm:adjconcentration} can be proved by strengthening the combinatorial arguments in \cite[Section 2.2]{Feige05}.
Here we provide an alternative proof using results from random matrices and concentration of measures and a seconder-order stochastic comparison argument from \cite{tomozei2014}.

Furthermore, we note that the condition $p= \Omega(\log n /n)$ in \prettyref{thm:adjconcentration} is in fact necessary to ensure that $\|A-\expect{A}\| = \Omega_{\Prob}(\sqrt{n p})$ (see \prettyref{app:adj} for a proof).
The condition $ p \le 1-c_1 $ can be dropped in the special case of $\calG(n,p)$.

\begin{proof}
We first use the second-order stochastic comparison arguments from \cite[Lemma 2]{tomozei2014}.
Since $0 \leq \Expect[A_{ij}] \leq p$, we have $A_{ij}-\expect{A_{ij}} \in [-p, 1]$ for all $i \neq j$ and hence $B_{ij}\triangleq (1-p) (A_{ij}-\expect{A_{ij}}  ) \in [-p, 1-p]$.
Let $C$ denote the adjacency matrix of a graph generated from $\calG(n,p)$.
Then, for any $i,j$,  $B_{ij}$ is stochastically smaller than $C_{ij}-\E{C_{ij}}$ under the convex ordering, i.e.,
$\expect{f (B_{ij}) } \le \expect{f(C_{ij} -\expect{C_{ij}} )}$   for any convex function $f$ on $[-p,1-p]$.\footnote{This follows from $\frac{f(1-p)-f(b)}{1-p-b} \geq f(1-p)-f(-p)$ for any $-p \leq b<1-p$, by the convexity of $f$.}
Since the spectral norm is a convex function and the coordinate random variables are independent (up to symmetry),
it follows that $\Expect[\| {B}\|] \leq \eexpect{ \| {C} - \eexpect{{C}} \| }$ and thus
\begin{align}
\Expect[ \|A-\expect{A}\| ]  =  \frac{1}{1-p} \Expect[\| {B}\|] \leq  \frac{1}{1-p} \eexpect{ \| {C} - \eexpect{{C}} \| } \leq  \frac{1}{c_1} \eexpect{ \| C - \eexpect{C} \| }. \label{eq:boundonA}
\end{align}

We next bound $\eexpect{ \| C - \eexpect{C} \|}$. Let $E=(E_{ij})$ denote an $n\times n$ matrix with independent entries drawn from
$\mu \triangleq \frac{p}{2} \delta_1 + \frac{p}{2} \delta_{-1} + (1-p) \delta_0$, which is the distribution of
a Rademacher random variable multiplied with an independent Bernoulli with bias $p$. Define $E'$ as $E'_{ii}=E_{ii}$ and $E'_{ij}=-E_{ji}$ for all $i \neq j$.
Let $C'$ be an independent copy of $C$. Let $D$ be a zero-diagonal symmetric matrix whose entries are drawn from $\mu$ and $D'$ be an independent copy of $D$.
Let $M=(M_{ij})$ denote an $n\times n$ zero-diagonal symmetric matrix whose entries are Rademacher and independent
from $C$ and $C'$.
We apply the usual symmetrization arguments:
\begin{align}
\Expect[\|C- \Expect[C]\|]
= & ~ \Expect[\|C - \Expect[C']\|] \overset{(a)}\leq \Expect[\|C-C'\|] \overset{(b)}{=} \Expect[\| (C-C') \circ M \| ] \overset{(c)}{\le} 2\Expect[ \|C \circ M\|] \nonumber \\
= & ~ 2 \Expect[\|D\|]  = 2\Expect[ \|D- \Expect[D'] \| ]
\overset{(d)}\leq 2 \Expect[\|D-D'\|] \overset{(e)}{=}2 \Expect[\|E-E'\| ] \overset{(f)}{\leq} 4 \, \Expect[\|E\|] ,\label{eq:sym}
\end{align}
where $(a),(d)$ follow from  the Jensen's inequality; $(b)$ follows because $C-C'$ has the same distribution as $(C-C')\circ M$, where $\circ$ denotes the element-wise product; $(c),(f)$ follow from the triangle inequality; $(e)$ follows from the fact that $D-D'$ has the same distribution as $E - E'$. Then, we apply the result of Seginer \cite{Seginer00} which characterized the expected spectral norm of i.i.d.\ random matrices within universal constant factors. Let $X_j \triangleq \sum_{i=1}^{n} E_{ij}^2$, which are independent $\Binom(n, p)$. Since $\mu$ is symmetric, \cite[Theorem 1.1]{Seginer00} and Jensen's inequality yield
\begin{equation}
\Expect[\|E\|]	\leq \kappa \,\expect{\pth{ \max_{j\in[n]}  X_j }^{1/2} } \leq \kappa \pth{ \expect{\max_{j\in[n]} X_j }}^{1/2}
	\label{eq:seginer}
\end{equation}
for some universal constant $\kappa$.
In view of the following Chernoff bound for the binomial distribution \cite[Theorem 4.4]{Mitzenmacher05}:
\begin{align*}
\prob{X_1 \ge t \log n } \le 2^{- t },
\end{align*}
for all $t \ge 6 np$, setting $t_0=6 \max\{np/\log n, 1\}$ and applying the union bound, we have
\begin{align}
\expect{\max_{j \in [n]} X_j}
= & ~ \int_0^\infty \prob{\max_{j \in [n]} X_j \geq t} \diff t \leq \int_0^\infty (n \, \prob{X_1 \geq t} \wedge 1) \diff t \nonumber \\
\leq & ~ t_0 \log n + n \int_{t_0\log n}^\infty 2^{-t} \diff t \leq (t_0+1) \log n \leq 6(1+2/c_0) np, \label{eq:maxX}
\end{align}
where the last inequality follows from $np \ge c_0\log n$.
Assembling \prettyref{eq:boundonA} -- \prettyref{eq:maxX},
we obtain
\begin{equation}
	\Expect[\|A - \Expect[A]\|] \leq c_2  \sqrt{np},
	\label{eq:spnorm-mean}
\end{equation}
for some positive constant $c_2$ depending only on $c_0, c_1$.
Since the entries of $A - \Expect[A]$ are valued in $[-1,1]$, Talagrand's concentration inequality for 1-Lipschitz convex functions (see, \eg, \cite[Theorem 2.1.13]{tao.rmt}) yields
\[
\prob{\|A - \Expect[A]\| \geq \Expect[\|A - \Expect[A]\|]+t} \leq c_3 \exp(-c_4 t^2)
\]
for some absolute constants $c_3,c_4$, which implies that for any $c>0$,
there exists $c'>0$ depending on $c_0, c_1$, such that $\prob{\|A - \Expect[A]\| \geq c' \sqrt{np}} \leq n^{-c}.$
\end{proof}

\subsection{Tail of the Binomial Distribution}
Let $X \sim \Binom\left( m , \frac{a\log n}{n} \right)$ and $R \sim \Binom\left( m, \frac{b\log n}{n} \right)$ for $m \in \naturals$ and $a,b>0$, where $m =\rho n +o(n) $ for some $\rho>0$ as $n\diverge$.
We need the following tail bounds.

\begin{lemma}[\cite{Abbe14}] \label{lmm:binomialconcentration}
Assume that $a>b$ and $k_n \in \naturals$ such that $k_n= (1+o(1)) \frac{\log n}{\log \log n}$.
Then
\begin{align*}
\prob{X-R \le k_n } \le  n^{ -  \rho \left(\sqrt{a}-\sqrt{b} \right)^2 +o(1)} .
\end{align*}
\end{lemma}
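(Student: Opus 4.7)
The plan is to use the standard Chernoff (exponential-tilting) bound on the lower tail of $X-R$. Since $X$ and $R$ are independent, for any $t \ge 0$
\begin{align*}
\Prob\{X - R \le k_n\} \le e^{t k_n}\, \E[e^{-t X}]\, \E[e^{t R}] = e^{t k_n}\, \bigl(1 + p(e^{-t}-1)\bigr)^m \bigl(1 + q(e^{t}-1)\bigr)^m,
\end{align*}
where $p = a\log n/n$ and $q = b\log n/n$. The rate-optimal choice of the tilt $t$ is the well-known saddle-point $t^\ast = \tfrac{1}{2}\log(a/b)$ obtained by differentiating $a(e^{-t}-1) + b(e^t-1)$; at this value one checks directly that $\rho a(e^{-t^\ast}-1) + \rho b(e^{t^\ast}-1) = -\rho(\sqrt{a}-\sqrt{b})^2$.

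Next I would linearize the MGFs. Because $p,q = O(\log n/n)$ and $m = \rho n (1+o(1))$, we have $mp = \rho a\log n (1+o(1))$, $mq = \rho b\log n(1+o(1))$, while $m p^2 = O(\log^2 n/n) = o(1)$ and similarly $mq^2 = o(1)$. Using $\log(1+x) = x + O(x^2)$ and that $e^{\pm t^\ast}-1$ is a fixed constant, I would conclude
\begin{align*}
\log \E[e^{-t^\ast X}] = \rho a(e^{-t^\ast}-1)\log n + o(\log n), \qquad \log \E[e^{t^\ast R}] = \rho b(e^{t^\ast}-1)\log n + o(\log n).
\end{align*}
Adding these two estimates and exponentiating gives
\begin{align*}
\E[e^{-t^\ast X}]\, \E[e^{t^\ast R}] \le n^{-\rho(\sqrt{a}-\sqrt{b})^2 + o(1)}.
\end{align*}

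Finally, since $t^\ast$ is a fixed constant and $k_n = (1+o(1))\log n/\log\log n = o(\log n)$, the tilting cost is negligible: $e^{t^\ast k_n} = n^{t^\ast k_n/\log n} = n^{o(1)}$. Combining this with the MGF bound yields the claimed estimate. The only mildly delicate point — and the step I would pay most attention to — is tracking that the second-order remainders in the expansion of $\log(1 + p(e^{-t^\ast}-1))$ and $\log(1+q(e^{t^\ast}-1))$ are indeed $o(1/\log n)$ after multiplication by $m$, so that they do not contaminate the leading exponent. Everything else is bookkeeping around the saddle-point computation.
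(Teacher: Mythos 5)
The paper does not reprove this lemma; it is cited directly from Abbe et al.\ \cite{Abbe14} without proof, so there is no in-paper argument to compare against. Your exponential-tilting derivation is correct and self-contained: the tilt $t^\ast = \tfrac12\log(a/b)$ is positive since $a>b$ (so the Markov/Chernoff step is legitimate), the identity $a(e^{-t^\ast}-1) + b(e^{t^\ast}-1) = -(\sqrt a - \sqrt b)^2$ checks out, the second-order remainder $m\cdot O(p^2) = O(\log^2 n / n) = o(1)$ is negligible against $\log n$, and the tilting cost $e^{t^\ast k_n} = n^{o(1)}$ because $k_n = o(\log n)$. This is the standard Chernoff computation for the lower tail of a difference of binomials and is essentially the argument in \cite{Abbe14}.
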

\begin{lemma}\label{lmm:binomialmaxminconcentration}
Let $k_n,k_n' \in [m]$ be such that $k_n= \tau \rho \log n+o(\log n)$ and $k'_n= \tau' \rho \log n+o(\log n)$ for some $0 \leq \tau \leq a$ and $\tau' \geq b$.
Then
\begin{align}
\prob{ X \le k_n } &=  n^{-  \rho \left( a - \tau \log \frac{\eexp a}{\tau} +o(1) \right)   } \label{eq:binomupbound1} \\
\prob{ R \ge  k_n' } & =  n^{-  \rho  \left( b - \tau' \log \frac{\eexp b}{\tau'} +o(1) \right)     } \label{eq:binomupbound2}.
\end{align}
\end{lemma}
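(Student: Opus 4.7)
The plan is to establish both estimates via Cram\'er-type large-deviation asymptotics for the binomial in the sparse regime $p=\Theta(\log n/n)$. The two bounds \prettyref{eq:binomupbound1} and \prettyref{eq:binomupbound2} are symmetric (lower tail for $X$ versus upper tail for $R$), so I will focus on \prettyref{eq:binomupbound1} and note that \prettyref{eq:binomupbound2} follows by the same argument with $\le$ and $\ge$ swapped and $a$ replaced by $b$; the constraint $\tau'\ge b$ ensures one is on the deviant side where the rate function is non-negative.

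For the upper bound on $\prob{X \le k_n}$, I would invoke the Chernoff inequality
\[
\prob{X \le k_n} \le \exp\bigl(-m\, D_{\Bern}(k_n/m \,\|\, p)\bigr),
\]
where $D_{\Bern}$ is the binary relative entropy. Since both $k_n/m$ and $p$ are $O(\log n/n) = o(1)$, a first-order Taylor expansion using $\log(1-u) = -u+O(u^2)$ reduces $D_{\Bern}(k_n/m\,\|\,p)$ to $(\log n/n)\bigl[\tau\log(\tau/a)+a-\tau\bigr] + O((\log n/n)^2)$. The algebraic identity $\tau\log(\tau/a)+a-\tau = a-\tau\log(ea/\tau)$, combined with $m = \rho n(1+o(1))$, yields $m\,D_{\Bern}(k_n/m\,\|\,p) = \rho\log n\cdot[a-\tau\log(ea/\tau)] + o(\log n)$, which exponentiates to the claimed upper bound.

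For the matching lower bound I would use $\prob{X\le k_n}\ge\prob{X=\lfloor k_n\rfloor}$ together with Stirling's formula. Writing $k' = \lfloor k_n\rfloor$, one has $\log\prob{X=k'} = \log\binom{m}{k'} + k'\log p + (m-k')\log(1-p)$. In the regime $k'\asymp\log n$ and $m\asymp n$, Stirling gives $\log\binom{m}{k'} = k'\log(em/k')+O(\log k')$; combining with $k'\log p = k'\log(a\log n/n)$ collapses the two $(\log n)^2$-order contributions into $\tau\rho\log n\cdot\log(ea/\tau)+o(\log n)$. Adding the third term $(m-k')\log(1-p) = -mp+o(\log n) = -a\rho\log n + o(\log n)$ then gives $\log\prob{X=k'} = -\rho\log n\cdot[a-\tau\log(ea/\tau)] + o(\log n)$, matching the Chernoff exponent.

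The main obstacle is the bookkeeping around the two cancelling $(\log n)^2$-order terms in the Stirling step: $\log\binom{m}{k'}$ contributes $+k'\log(m/k')$ and $\log p^{k'}$ contributes $+k'\log p$, and only after combining does one see that the surviving exponent is of order $\log n$ rather than $(\log n)^2$. Boundary values cause no trouble: at $\tau=0$ one checks $\prob{X=0} = (1-p)^m = n^{-\rho a(1+o(1))}$ directly (with $\tau\log(ea/\tau)\to 0$ by a standard limit), while at $\tau=a$ the event is typical and both sides of \prettyref{eq:binomupbound1} equal $n^{o(1)}$.
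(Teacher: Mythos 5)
Your proposal is correct and takes essentially the same route as the paper: a Cram\'er/Chernoff upper bound matched by a Stirling-type point-mass lower bound, with the exponent identified as the binary divergence $m\,d(k/m\,\|\,p)$. The paper packages this slightly differently --- it cites the two-sided non-asymptotic binomial bounds from Ash and, for \prettyref{eq:binomupbound1}, replaces your Chernoff step with the mode-monotonicity squeeze $\prob{X=k_n}\le\prob{X\le k_n}\le k_n\prob{X=k_n}$ --- but the underlying large-deviation computation and the resulting exponent are identical.
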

\begin{proof}
We use the following non-asymptotic bound on the binomial tail probability \cite[Lemma 4.7.2]{ash-itbook}: For $U \sim \Binom(n,p)$,
\begin{align}
(8 k (1-\lambda))^{-1/2} \exp(- n d(\lambda\|p)) \leq \prob{U \geq k} \leq \exp(- n d(\lambda\|p))
\label{eq:ash1}
\end{align}
where $\lambda = \frac{k}{n} \in (0,1)$ and   $d(\lambda\|p) = \lambda \log \frac{\lambda}{p} + (1-\lambda) \log \frac{1-\lambda}{1-p}$ is the binary divergence function. Then \prettyref{eq:binomupbound2} follows from \prettyref{eq:ash1} by noting that $d(\frac{k_n'}{m} \| \frac{b \log n}{n}) = (b - \tau' \log \frac{b \eexp}{\tau'} + o(1)) \frac{\log n}{n}$.

To prove \prettyref{eq:binomupbound1}, we use the following bound on binomial coefficients \cite[Lemma 4.7.1]{ash-itbook}:
\begin{align}
\frac{\sqrt{\pi}}{2} \leq \frac{\binom{n}{k}}{(2 \pi n \lambda (1-\lambda))^{-1/2}  \exp(n h(\lambda))} \leq 1  .
\label{eq:ash2}
\end{align}
where $\lambda = \frac{k}{n} \in (0,1)$ and  $h(\lambda) = -\lambda \log \lambda - (1-\lambda) \log (1-\lambda)$ is the binary entropy function.
Note that the mode of $X$ is at $\lfloor(m+1)p\rfloor = (a \rho+o(1)) \log n$, which is at least $k_n$ for sufficiently large $n$.
Therefore, $\prob{ X = k  }$ is non-decreasing in $k$ for $k \in [0, k_n]$ and hence
\begin{equation}
\prob{ X = k_n  } \leq \prob{ X \le k_n } \leq k_n \prob{ X = k_n  }	
	\label{eq:mode}
\end{equation}
 where $\prob{ X = k_n } = \binom{m}{k_n} p^k_n (1-p)^{m-k_n}$ and $p=a \log n /n$.
 Applying \prettyref{eq:ash2} to \prettyref{eq:mode} yields
 $$\prob{ X \le k_n } = (\log n)^{O(1)} \exp( - n d(k_n/m \| p)),$$ which is the desired \prettyref{eq:binomupbound1}.
\end{proof}

\subsection{Proofs for the stochastic block model}
	\label{sec:pf-sbm}

The following lemma provides a deterministic sufficient condition for the success of \SDP \prettyref{eq:SBMconvex} in the case $a>b$.
\begin{lemma}\label{lmm:SBMKKT}
Suppose there exist $D^\ast=\diag{d^\ast_i}$ and $ \lambda^\ast \in \reals$ such that $S^* \triangleq D^\ast-A + \lambda^\ast \allones$ satisfies $S^\ast \succeq 0$, $\lambda_2(S^\ast)>0$ and
\begin{align}
S^\ast \sigma^\ast = 0 . \label{eq:SBMKKT}
\end{align}
Then $\widehat{Y}_{\SDP}=Y^\ast$ is the unique solution to \prettyref{eq:SBMconvex}.
\end{lemma}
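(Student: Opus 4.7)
The plan is to run a standard dual-certificate argument: the triple $(D^\ast,\lambda^\ast,S^\ast)$ should be read as a KKT certificate for the SDP \prettyref{eq:SBMconvex}, with $D^\ast$ the multipliers for the diagonal constraints $Y_{ii}=1$, $\lambda^\ast$ the multiplier for $\Iprod{\allones}{Y}=0$, and $S^\ast\succeq 0$ the dual slack associated with $Y\succeq 0$. The key identity is $A = D^\ast+\lambda^\ast \allones - S^\ast$, which follows immediately from the definition of $S^\ast$.

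First I would fix any feasible $Y$ for \prettyref{eq:SBMconvex} and compute
\begin{align*}
\Iprod{A}{Y^\ast - Y} = \Iprod{D^\ast}{Y^\ast - Y} + \lambda^\ast \Iprod{\allones}{Y^\ast - Y} - \Iprod{S^\ast}{Y^\ast - Y}.
\end{align*}
Since $D^\ast$ is diagonal and $Y_{ii}=Y^\ast_{ii}=1$ for all $i$, the first term vanishes; since $\Iprod{\allones}{Y}=\Iprod{\allones}{Y^\ast}=0$, the second term vanishes. Next, the hypothesis $S^\ast \sigma^\ast = 0$, together with $Y^\ast = \sigma^\ast(\sigma^\ast)^\top$, gives $\Iprod{S^\ast}{Y^\ast} = (\sigma^\ast)^\top S^\ast \sigma^\ast = 0$. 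Therefore
\begin{align*}
\Iprod{A}{Y^\ast - Y} \;=\; \Iprod{S^\ast}{Y}.
\end{align*}
Because $S^\ast \succeq 0$ and $Y \succeq 0$, the right-hand side is nonnegative, which already proves that $Y^\ast$ attains the maximum.

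For uniqueness, I would exploit the spectral hypothesis $\lambda_2(S^\ast) > 0$. Combined with $S^\ast \succeq 0$ and $S^\ast \sigma^\ast = 0$, it forces the null space of $S^\ast$ to be exactly the one-dimensional span of $\sigma^\ast$. If $Y$ is any optimal feasible matrix, then $\Iprod{S^\ast}{Y} = 0$, and since both matrices are PSD this implies $S^\ast Y = 0$, so every column of $Y$ lies in $\ker(S^\ast) = \reals \sigma^\ast$. Hence $Y = \alpha \sigma^\ast (\sigma^\ast)^\top$ for some $\alpha \in \reals$, and the diagonal constraint $Y_{ii}=1$ pins down $\alpha = 1$, giving $Y = Y^\ast$.

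There is essentially no ``hard part'' beyond careful bookkeeping here; the lemma is a deterministic sufficient condition and the work is purely algebraic. The genuine difficulty of \prettyref{thm:SBMSharp} will be in the subsequent probabilistic construction of $(D^\ast,\lambda^\ast)$ under the SBM \prettyref{eq:SBMscaling} so that $S^\ast$ actually satisfies $S^\ast\succeq 0$ with $\lambda_2(S^\ast)>0$ with high probability whenever $(\sqrt{a}-\sqrt{b})^2>2$, which is where the spectral bound of \prettyref{thm:adjconcentration} and the binomial tail estimate of \prettyref{lmm:binomialconcentration} will enter.
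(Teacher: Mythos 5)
Your argument is correct and matches the paper's proof in substance: both establish optimality by computing $\Iprod{A}{Y^\ast - Y}$ (the paper phrases this via the Lagrangian and weak duality, you do the algebra directly, but the cancellations used are identical) and both prove uniqueness by showing $\Iprod{S^\ast}{Y}=0$, then deducing from $S^\ast\succeq 0$, $\lambda_2(S^\ast)>0$, and $Y\succeq 0$ that $Y$ is a scalar multiple of $Y^\ast$, with the diagonal constraint fixing the scalar. There is no meaningful difference in approach.
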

\begin{proof}
The Lagrangian function is given by
\begin{align*}
L(Y, S, D, \lambda) = \langle A, Y \rangle + \langle S, Y \rangle - \langle D, Y -\identity \rangle - \lambda \Iprod{\allones}{Y},
\end{align*}
where the Lagrangian multipliers are denoted by $S \succeq 0$, $D=\diag{d_i}$, and $\lambda \in \reals$.
Then for any $Y$ satisfying the constraints in \prettyref{eq:SBMconvex},
\begin{align*}
 \Iprod{A}{Y } \overset{(a)}{\le} L(Y, S^\ast, D^\ast, \lambda^\ast) = \Iprod{D^\ast}{I}
=\Iprod{D^\ast}{Y^\ast}=\Iprod{A+S^\ast-\lambda^\ast \allones}{Y^\ast}\overset{(b)}=\Iprod{A}{Y^\ast},
\end{align*}
where $(a)$ holds because $\Iprod{S^\ast}{Y} \ge 0$; $(b)$ holds because $\Iprod{Y^\ast}{S^\ast}=(\sigma^\ast)^\top S^\ast \sigma^\ast =0$ by \prettyref{eq:SBMKKT}.
Hence, $Y^\ast$ is an optimal solution. It remains to establish its uniqueness. To this end, suppose $\tY$ is
an optimal solution. Then,
\begin{align*}
\Iprod{S^\ast}{\tY}=\Iprod{D^\ast-A+ \lambda^\ast \allones}{\tY}\overset{(a)}{=} \Iprod{D^\ast-A}{\tY} \overset{(b)}{=} \Iprod{D^\ast-A}{Y^\ast} {=}\Iprod{S^\ast}{Y^\ast} =0.
\end{align*}
where $(a)$ holds because $\Iprod{\allones}{\tY}=0$; $(b)$ holds because $\Iprod{A}{\tY}=\Iprod{A}{Y^\ast}$ and $\tY_{ii}=Y^*_{ii}=1$ for all $i \in [n]$.
In view of \prettyref{eq:SBMKKT}, since $\tY \succeq 0$, $S^\ast \succeq 0$ with $\lambda_2(S^*)>0$, $\tY$ must be a multiple of $Y^*=\sigma^\ast (\sigma^\ast)^\top$.
Because $\tY_{ii}=1$ for all $i \in [n]$, $\tY=Y^\ast$.
\end{proof}

\begin{proof}[Proof of \prettyref{thm:SBMSharp}]
The theorem is proved first for $a>b$.
Let $D^\ast=\diag{d^\ast_i}$ with
\begin{equation}
d^\ast_i  = \sum_{j=1}^n A_{ij} \sigma^\ast _i \sigma^* _j	
	\label{eq:di-SBM}
\end{equation}
and choose any $\lambda^* \geq \frac{p+q}{2}$. It suffices to show that $S^* = D^\ast-A + \lambda^\ast \allones$ satisfies the conditions in \prettyref{lmm:SBMKKT} with probability $1-n^{-\Omega(1)}$.


By definition, $d^\ast_i \sigma_i^\ast = \sum_{j} A_{ij} \sigma^\ast _j$ for all $i$, \ie, $D^\ast \sigma^\ast =A \sigma^\ast$. Since $ \allones \sigma^\ast=0$, \prettyref{eq:SBMKKT} holds, that is, $S^*\sigma^* = 0$. It remains to verify that $S^\ast \succeq 0$ and $\lambda_2(S^\ast)>0$ with probability
at least $ 1 - n^{-\Omega(1)},$ which amounts to showing that
\begin{equation}
\prob{\inf_{x \Perp \sigma^\ast, \|x\|=1} x^\top S^\ast x  > 0} \geq  1 - n^{-\Omega(1)}.	
	\label{eq:lambda2}
\end{equation}
Note that $\expect{A}= \frac{p-q}{2} Y^\ast + \frac{p+q}{2} \allones - p \identity$ and $Y^\ast= \sigma^\ast (\sigma^\ast)^\top$. Thus for any $x$ such that $x \Perp \sigma^\ast$ and $\|x\|=1$,
\begin{align}
x^\top S^\ast x &= x^\top D^\ast x- x^\top \expect{A} x  +  \lambda^\ast x^\top \allones x - x^\top  \left( A - \expect{A} \right) x \nonumber  \\
&  = x^\top D^\ast  x  - \frac{p-q}{2} x^\top Y^\ast x + \left( \lambda^\ast- \frac{p+q}{2} \right) x^\top \allones x +p  -
x^\top  \left( A- \expect{A} \right) x  \nonumber \\
& \overset{(a)}{\ge} x^\top D^\ast  x +p   -
x^\top  \left( A- \expect{A} \right) x  \ge \min_{i\in[n]} d^\ast_i  + p - \|  A - \expect{A}\|. \label{eq:SBMPSDCheck}
\end{align}
where $(a)$ holds since $\lambda^\ast \ge \frac{p+q}{2}$ and $\iprod{x}{\sigma^\ast}=0$.
It follows from \prettyref{thm:adjconcentration} that $\|  A - \expect{A}\| \leq c' \sqrt{\log n} $ with probability at least $1-n^{-c}$ for some positive constants $c,c'$ depending only on $a$.
Moreover, note that each $d_i$ is equal in distribution to $X-R$, where $X\sim\Binom(\frac{n}{2}-1,\frac{a\log n}{n})$ and $R\sim\Binom(\frac{n}{2},\frac{b\log n}{n})$ are independent. Hence, \prettyref{lmm:binomialconcentration} implies that
\begin{align*}
\prob{X-R \ge \frac{\log n}{\log \log n}} \ge 1- n^{-(\sqrt{a}-\sqrt{b})^2/2+o(1)}.
\end{align*}
Applying the union bound implies that  $\min_{i\in[n]} d^\ast_i  \ge \frac{\log n}{\log \log n}$ holds with probability at least $1 - n^{1-(\sqrt{a}-\sqrt{b})^2/2+o(1)}$. It follows from the assumption $(\sqrt{a}-\sqrt{b})^2>2$ and \prettyref{eq:SBMPSDCheck} that the desired \prettyref{eq:lambda2} holds, completing the proof in the case $a>b$.


For the case $a<b$, we replace the $\argmax$ by $\argmin$ in the \SDP \prettyref{eq:SBMconvex}, which is equivalent to substituting $-A$ for $A$ in the original maximization problem, as well as the sufficient condition in \prettyref{lmm:SBMKKT}. Set the dual variable $d_i^\ast$ according to \prettyref{eq:di-SBM} with $-A$ replacing $A$ and choose any $\lambda^\ast \ge - \frac{p+q}{2}$. Then \prettyref{eq:SBMKKT} still holds
and \prettyref{eq:SBMPSDCheck} changes to $x^\top S^\ast x  \ge \min_{i\in[n]} d^\ast_i  - p - \|  A - \expect{A}\|$, where
$\min_{i\in[n]} d^\ast_i  \ge \frac{\log n}{\log \log n}$ holds with probability at least $1 - n^{1-(\sqrt{a}-\sqrt{b})^2/2+o(1)}$ by \prettyref{lmm:binomialconcentration} and the union bound. Therefore,  in view of \prettyref{thm:adjconcentration} and the assumption $(\sqrt{a}-\sqrt{b} )^2>2$, the desired \prettyref{eq:lambda2} still holds, completing the proof for the case $a<b$.
\end{proof}

\begin{remark}
	For simplicity so far we have focused on the case where $p=\frac{a\log n}{n},q=\frac{b \log n}{n}$ with $a,b$ being fixed constants.
	If we are strictly above the recover threshold, namely, $( \sqrt{a}- \sqrt{b} )^2 > 2 $, \prettyref{thm:SBMSharp} shows that the probability of error is polynomially small in $n$, which cannot be improved by MLE (though a better exponent is conceivable). Now, if we allow $a=a_n$ and $b=b_n$ to vary with $n$, the following result for the optimal estimator (MLE) has been obtained in \cite{Mossel14} that gives the second-order refinement of \prettyref{thm:optimalSBM}: If $a_n,b_n=\Theta(1)$, then clusters can be exactly recovered up to a permutation of cluster indices with
	probability converging to $1$ if and only if
	\begin{equation}
	( \sqrt{a_n}- \sqrt{b_n} )^2 \geq 2 - \frac{ \log \log n }{\log n} + \omega\pth{\frac{1}{\log n}}.
	\label{eq:mls}
\end{equation}
	where for two positive sequences $x_n=\omega(y_n)$ denotes $y_n=o(x_n)$.
	Inspecting the proof of \prettyref{thm:SBMSharp} and replacing \prettyref{lmm:binomialconcentration} by the non-asymptotic version in \cite[Lemma 2]{HajekWuXuSDP15}, one can strengthen the sufficient condition for the success of SDP  \prettyref{eq:SBMconvex}:
	$\min_{Y^* \in \calY_n} \pprob{\widehat{Y}_{\SDP}=Y^\ast} \to 1 $ as $n \to \infty$, provided that
	\begin{equation}
	( \sqrt{a_n}- \sqrt{b_n} )^2 \geq 2 + \frac{ C }{\sqrt{\log n}} + \omega\pth{\frac{1}{\log n}}.
	\label{eq:SDP-2nd}
\end{equation}
	for some universal constant $C$, which is stronger than the optimal condition \prettyref{eq:mls}. It is unclear whether \prettyref{eq:SDP-2nd} is necessary, nor do we know if SDP requires $( \sqrt{a_n}- \sqrt{b_n} )^2-2$ to be positive to succeed.
	\label{rmk:anbn}
\end{remark}

\subsection{Proofs for the planted densest subgraph model}
	\label{sec:pf-pds}
\begin{lemma}\label{lmm:PlanteddenseKKT}
Suppose there exist $D^\ast=\diag{d_i^*} \ge 0$, $B^\ast \in \calS^n$ with $B^\ast \ge 0$, $\lambda^\ast \in \reals$, and
$\eta^\ast \in \reals$ such that $ S^\ast \triangleq D^\ast-B^\ast- A +  \eta^\ast \identity + \lambda^\ast \allones$ satisfies $S^\ast \succeq 0$, $\lambda_2(S^\ast)>0$, and
\begin{align}
S^\ast \xi^\ast &=0, \nonumber \\
d^\ast_i(Z^\ast_{ii}-1) & =0, \quad \forall i, \nonumber \\
B^\ast_{ij} Z^\ast_{ij} &=0, \quad \forall i,j. \label{eq:PlanteddenseKKT}
\end{align}
Then $\widehat{Z}_{\SDP}=Z^\ast$ is the unique solution to \prettyref{eq:PDSCVX}.
\end{lemma}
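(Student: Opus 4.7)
The plan is to mimic the weak-duality/complementary-slackness argument used in the proof of Lemma \ref{lmm:SBMKKT}, but now also accounting for the entrywise inequality constraints $Z_{ii}\le 1$ and $Z_{ij}\ge 0$ via the dual variables $D^\ast$ and $B^\ast$. Concretely, I would write the Lagrangian of \prettyref{eq:PDSCVX} using multipliers $S\succeq 0$ for the PSD constraint, $D=\diag{d_i}$ with $d_i\ge 0$ for $Z_{ii}\le 1$, $B\in\calS^n$ with $B\ge 0$ for $Z_{ij}\ge 0$, and free scalars $\eta,\lambda$ for the two trace-type equalities. Each penalty is non-negative on the primal-feasible set, so for any feasible $Z$,
\begin{align*}
\Iprod{A}{Z}
\le \Iprod{A+S+B-D-\eta\identity-\lambda\allones}{Z}+\Iprod{D}{\identity}+\eta K+\lambda K^2.
\end{align*}
Plugging in the stipulated $S^\ast=D^\ast-B^\ast-A+\eta^\ast\identity+\lambda^\ast\allones$ makes the coefficient of $Z$ vanish identically, and the bound collapses to $\Iprod{D^\ast}{\identity}+\eta^\ast K+\lambda^\ast K^2$.

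The second step is to show this upper bound is attained by $Z^\ast$. Using the identity $A=D^\ast-B^\ast+\eta^\ast\identity+\lambda^\ast\allones-S^\ast$, I would compute $\Iprod{A}{Z^\ast}$ term by term. The three complementary slackness conditions in \prettyref{eq:PlanteddenseKKT} give $\Iprod{D^\ast}{Z^\ast}=\Iprod{D^\ast}{\identity}$, $\Iprod{B^\ast}{Z^\ast}=0$, and $\Iprod{S^\ast}{Z^\ast}=(\xi^\ast)^\top S^\ast\xi^\ast=0$, while primal feasibility of $Z^\ast$ gives $\Iprod{\identity}{Z^\ast}=K$ and $\Iprod{\allones}{Z^\ast}=K^2$. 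Assembling these yields $\Iprod{A}{Z^\ast}=\Iprod{D^\ast}{\identity}+\eta^\ast K+\lambda^\ast K^2$, matching the bound, so $Z^\ast$ is optimal.

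For uniqueness, suppose $\tZ$ is another optimizer. Chasing the equality case in the chain of inequalities above forces all of the non-negative slack terms to vanish at $\tZ$; in particular $\Iprod{S^\ast}{\tZ}=0$. Since $S^\ast,\tZ\succeq 0$, this implies $S^\ast\tZ=0$, so every column of $\tZ$ lies in $\ker S^\ast$. The hypothesis $\lambda_2(S^\ast)>0$ together with $S^\ast\xi^\ast=0$ tells us $\ker S^\ast=\mathrm{span}(\xi^\ast)$, hence $\tZ=\mu\,\xi^\ast(\xi^\ast)^\top$ for some scalar $\mu$, which must be $\mu=1$ by the equality $\Iprod{\identity}{\tZ}=K=\|\xi^\ast\|^2$. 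Therefore $\tZ=Z^\ast$.

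The argument is entirely routine convex duality and I do not anticipate any substantive obstacle; the only care needed is in bookkeeping the signs and symmetry of the multipliers (in particular treating $B^\ast$ as symmetric so that $\Iprod{B^\ast}{Z}=\sum_{ij}B^\ast_{ij}Z_{ij}$ matches the entrywise slack). The real content of the paper lies in subsequently constructing such dual certificates $(D^\ast,B^\ast,\eta^\ast,\lambda^\ast)$ with high probability under the planted dense subgraph model — this lemma just isolates what needs to be built.
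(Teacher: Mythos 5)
Your proposal is correct and follows essentially the same route as the paper: Lagrangian weak duality to establish optimality of $Z^\ast$, then forcing the slack terms to vanish at any other optimizer $\tZ$ to conclude $\Iprod{S^\ast}{\tZ}=0$, and finally the $\lambda_2(S^\ast)>0$ rank argument plus the trace normalization $\Iprod{\identity}{\tZ}=K$ to pin down $\tZ=Z^\ast$. The only cosmetic difference is that the paper bounds $\Iprod{S^\ast}{\tZ}$ directly via a short chain of estimates rather than appealing to ``equality throughout the Lagrangian inequality,'' but the two are equivalent.
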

\begin{proof}
The Lagrangian function is given by
\begin{align*}
L(Z, S, D, B, \lambda, \eta) =  \langle A, Z \rangle + \langle S, Z \rangle - \langle D, Z -\identity \rangle + \Iprod{B}{Z}
-  \eta \left( \Iprod{\identity}{Z} - K \right) - \lambda \left( \Iprod{\allones}{Z} - K^2 \right),
\end{align*}
where $S \succeq 0$, $D=\diag{d_i} \ge 0$, $B \in \calS^n$ with $B \ge 0$, and $\lambda, \eta \in \reals$ are the Lagrangian multipliers.
Then, for any $Z$ satisfying the constraints in \prettyref{eq:PDSCVX},
It follows that
\begin{align*}
\Iprod{A}{Z } & \overset{(a)}{\le}  L(Z, S^\ast, D^\ast, B^\ast, \lambda^\ast, \eta^\ast)
= \Iprod{D^\ast}{I} + \eta^\ast K + \lambda^\ast K^2
\overset{(b)}=\Iprod{D^\ast}{Z^\ast} + \eta^\ast K + \lambda^\ast K^2 \\
& =\Iprod{A+ B^\ast + S^\ast - \eta^\ast \identity - \lambda^\ast \allones  }{Z^\ast}  + \eta^\ast K + \lambda^\ast K^2 \overset{(c)}=
\Iprod{A}{Z^\ast},
\end{align*}
where $(a)$ follows because $\Iprod{S^\ast}{Z} \ge 0$, $\Iprod{D^\ast}{Z-I} \le 0$, and
$\Iprod{B^\ast}{Z} \ge 0$; $(b)$ holds due to $d^\ast_i(Z^\ast_{ii}-1) =0, \forall i$; $(c)$ holds because
$B^\ast_{ij} Z^\ast_{ij} =0, \forall i,j$ and $\Iprod{Z^\ast }{S^\ast}= ( \xi^\ast )^\top S^\ast \xi^\ast =0$.
Hence, $Z^\ast$ is an optimal solution. It remains to establish the uniqueness. To this end, suppose $\tZ$ is another optimal solution. Then,
\begin{align*}
\Iprod{S^\ast}{\tZ}=\Iprod{D^\ast-B^\ast-A+ \eta^\ast \identity + \lambda^\ast \allones }{\tZ} \overset{(a)}{=}\Iprod{D^\ast-B^\ast-A}{\tZ}\overset{(b)}{\le} \Iprod{D^\ast-A}{Z^\ast} {=}\Iprod{S^\ast}{Z^\ast} =0.
\end{align*}
where $(a)$ holds because $\Iprod{\identity}{\tZ}=K$ and $\Iprod{\allones}{\tZ}=K^2$;
$(b)$ holds because $\Iprod{A}{\tZ}=\Iprod{A}{Z^\ast}$, $B^*,\tZ \ge 0$, and $\Iprod{D^*}{\tZ} \leq \sum_{i\in C^*} d^*_{i} = \Iprod{D^*}{Z^*}$ since $d_i^* \geq 0$ and $\tZ_{ii} \leq 1$ for all $i \in [n]$.
Since $\tZ \succeq 0$ and $S^\ast \succeq 0$ with $\lambda_2(S^\ast)>0$, $\tZ$ needs to be a multiple of $Z^\ast=\xi^\ast
(\xi^\ast)^\top$. Then $\tZ=Z^\ast$ since $\Tr(\tZ)=\Tr(Z^\ast)=K$.
\end{proof}

\begin{proof}[Proof of \prettyref{thm:PlantedSharp}]
The theorem is proved first for $a>b$.
Recall $\tau^\ast=\frac{a-b}{\log a- \log b}$ if $a,b>0$ and $ a \neq b$.
Let $\tau^\ast =0$ if $a=0$ or $b=0$. Choose $\lambda^\ast =  \tau^\ast \log n/n $,
$\eta^\ast= \| A -\expect{A}\|$, $D^\ast=\diag{d_i^*}$ with
\begin{align*}
d_i^\ast=\left\{
 \begin{array}{rl}
   \sum_{j \in C^\ast} A_{ij} - \eta^\ast - \lambda^\ast K   & \text{if } i \in C^\ast\\
   0 & \text{otherwise}
    \end{array} \right..
\end{align*}
Define $b^\ast_i \triangleq \lambda^\ast - \frac{1}{K}\sum_{j \in C^\ast} A_{ij}$ for $i \notin C^\ast$. Let $B^*\in \calS^n$ be given by
\[
B^*_{ij} = b^\ast_i \indc{i \notin C^*, j \in C^\ast} + b^\ast_j \indc{i \in C^*, j \notin C^\ast} .
\]
It suffices to show that $(S^\ast, D^\ast, B^\ast)$
satisfies the conditions in \prettyref{lmm:PlanteddenseKKT} with probability at least $ 1 - n^{-\Omega(1)}.	$

By definition, we have $d^\ast_i(Z^\ast_{ii}-1) =0$
and $B^\ast_{ij} Z^\ast_{ij} =0$ for all $i,j \in [n]$. Moreover, for all $i \in C^\ast$,
\begin{align*}
d_i^\ast \xi^\ast_i = d_i^\ast =\sum_{j} A_{ij} \xi^\ast_j  - \eta^\ast - \lambda^\ast K
= \sum_{j} A_{ij} \xi^\ast_j + \sum_j B^\ast_{ij} \xi^\ast_j - \eta^\ast - \lambda^\ast K,
\end{align*}
where the last equality holds because $B^\ast_{ij}=0$ if $(i,j) \in C^\ast \times C^\ast$;
for all $i \notin C^\ast$,
\begin{align*}
\sum_j  A_{ij}  \xi^\ast_j + \sum_j   B^\ast_{ij}\xi^\ast_j   - \lambda^\ast K = \sum_{j \in C^\ast}  A_{ij} + K b_i^\ast -\lambda^\ast K =0,
\end{align*}
where the last equality follows from our choice of $b^\ast_i$. Hence, $D^\ast \xi^\ast=A \xi^\ast + B^\ast \xi^\ast - \eta^\ast \xi^\ast - \lambda^\ast K \mathbf{1}$ and consequently $S^\ast \xi^\ast =0$.

We next show that $D^\ast \ge 0$, $B^\ast \ge 0$ with
probability at least $ 1 - n^{-\Omega(1)}. $ It follows from \prettyref{thm:adjconcentration} that $\eta^\ast \leq c'  \sqrt{ \log n}$
with probability at least $1 - n^{-\Omega(1)} $ for some positive constant $c'$  depending only on $a$. Furthermore,
let $X_i \triangleq \sum_{j \in C^\ast} A_{ij}$. Then $X_i \sim \Binom(K-1, \frac{a\log n}{n})$ if $i \in C^\ast$ and
$\Binom(K, \frac{b\log n}{n})$ otherwise.
We divide the analysis into two separate cases. First consider the case $b=0$, then $X_i=0$ for all $i \notin C^\ast$. Since $\tau^\ast=0$ in this case, $\min_{i \notin C^\ast} b_i^\ast = 0$ holds automatically. For any $i \in C^\ast$, applying \prettyref{lmm:binomialmaxminconcentration} with $\tau=0$ yields
\begin{align*}
\prob{X_i \ge   \frac{\log n} {\log \log n} } & \ge 1- \prob{ X_i  \le \frac{\log n} {\log \log n}  } \ge 1-n^{-\rho a +o(1)}.
\end{align*}
 Applying the union bound implies that $\Prob\{\min_{i \in C^\ast} X_i   \ge  \frac{\log n} {\log \log n}\} \geq 1-n^{1-\rho a+o(1)} \to 1$, because $\rho f(a,0)= \rho a >1$ by the assumption \prettyref{eq:planteddensesubgraphthreshold}. Since $\sqrt{\log n} =o(\frac{\log n}{\log \log n})$ and $\tau^\ast=0$, it follows that with probability at least
$1 - n^{-\Omega(1)}$, $\min_{i \in C^\ast} d_i^\ast \ge 0$ and we are done with the case $b=0$. For $b>0$,
\prettyref{lmm:binomialmaxminconcentration} implies that
\begin{align*}
\prob{X_i \ge  \rho \tau^\ast \log n +  \frac{\log n} {\log \log n} } & \ge 1- n^{-\rho\left( a-\tau^\ast \log \frac{\eexp a}{\tau^\ast} +o(1) \right) }, \quad  \forall i \in C^\ast, \\
\prob{X_i \le  \rho \tau^\ast \log n } & \ge 1-  n^{-\rho\left( b-\tau^\ast \log \frac{\eexp b}{\tau^\ast} +o(1) \right) }, \quad \forall i \notin C^\ast.
\end{align*}
By definition, $f(a,b)= a - \tau^\ast \log \frac{\eexp a}{\tau^\ast}= b - \tau^\ast \log \frac{\eexp b}{\tau^\ast} $ in this case. Applying the union bound implies that with probability at least $1-n^{1-\rho f(a,b)+o(1)}$,
\begin{align*}
\min_{i \in C^\ast} X_i  & \ge \rho \tau^\ast \log n +  \frac{\log n} {\log \log n}, \\
\max_{i \notin C^\ast } X_i & \le \rho \tau^\ast \log n.
\end{align*}
Since $\sqrt{\log n} =o(\frac{\log n}{\log \log n})$ and $\rho f(a,b)>1$ by the assumption \prettyref{eq:planteddensesubgraphthreshold}, it follows that with probability at least $1 - n^{-\Omega(1)},$  $\min_{i \in C^\ast} d_i^\ast \ge 0$ and $\min_{i \notin C^\ast} b_i^\ast \ge 0$.

It remains to verify $S^\ast \succeq 0$ with $\lambda_2(S^\ast)>0$ with probability at least $\geq  1 - n^{-\Omega(1)}$, \ie,
\begin{equation}
\prob{\inf_{x \Perp \sigma^\ast, \|x\|=1} x^\top S^\ast x  > 0} \geq  1 - n^{-\Omega(1)}.		
	\label{eq:lambda2PDS}
\end{equation}
Note that
\begin{align*}
\expect{A}=  (p-q) Z^\ast - p
\begin{bmatrix}
 \identity_{K \times K} & \zeros \\
\zeros & \zeros
\end{bmatrix}
- q \begin{bmatrix}
  \zeros & \zeros \\
\zeros & \identity_{(n-K) \times (n-K)}
\end{bmatrix} + q \allones.
\end{align*}
It follows that for any $x \Perp \sigma^\ast$ and $\|x\|=1$,
\begin{align}
x^\top S^\ast x & ~ = x^\top D^\ast x - x^\top B^\ast x +  (\lambda^\ast -q) x^\top \allones x + p \sum_{i \in C^\ast} x_i^2 + q \sum_{i \notin C^\ast} x_i^2
+ \eta^\ast - x^\top \left(A -\expect{A} \right) x
\nonumber  \\
& ~ \overset{(a)}{=} \sum_{ i \in C^\ast} \left( d_i^\ast + p  \right) x_i^2  + (\lambda^\ast -q) x^\top \allones x
 +  q  \sum_{i \notin C^\ast} x_i^2 + \eta^\ast -x^\top \left(A -\expect{A} \right) x  \nonumber \\
& ~ \ge  \left(  \min_{ i \in C^\ast} d_i^\ast + p \right) \sum_{i \in C^\ast} x_i^2 + (\lambda^\ast -q) x^\top \allones x
  + q \sum_{i \notin C^\ast} x_i^2 + \eta^\ast
 - \| A -\expect{A}\| \nonumber \\
 &~  \overset{(b)}{\ge}  \left(  \min_{ i \in C^\ast} d_i^\ast + p \right) \sum_{i \in C^\ast} x_i^2  + q  \sum_{i \notin C^\ast} x_i^2 \nonumber \\
 &~  \overset{(c)}{\ge}  \min\sth{\min_{ i \in C^\ast} d_i^\ast + p, q},
  \label{eq:semidefinitebound}
 \end{align}
where $(a)$ holds because $B^\ast_{ij}=0$ for all $i, j \notin C^\ast$ and
\begin{align*}
x^\top B^\ast x  = 2 \sum_{i \notin C^\ast} \sum_{j \in C^\ast} x_i x_j B^\ast_{ij}=
2 \sum_{i \notin C^\ast} x_i b^\ast_{i} \sum_{j \in C^\ast} x_j =0 ;
\end{align*}
$(b)$ holds because $\eta^\ast= \| A -\expect{A}\|$ and $\lambda^\ast =\tau^*  \frac{\log n}{n} \ge q= b \frac{\log n}{n}$, since $\log \frac{a}{b} \leq \frac{a}{b} -1$; $(c)$ is due to $\|x\|^2 = 1$.
Notice that we have shown $ \min_{i \in C^\ast} d_i^\ast \ge 0$ with probability at least $1 - n^{-\Omega(1)}.$
Therefore, the desired  \prettyref{eq:lambda2PDS} holds in view of \prettyref{eq:semidefinitebound}, completing the proof in the case $a>b$.

For the case $a<b$, it suffices to modify the above proof by replacing $A$ with $-A$ in the \SDP \prettyref{eq:PDSCVX},
\prettyref{lmm:PlanteddenseKKT}, and the definitions of $d_i^\ast$ and $b_i^\ast$, and choosing
$\lambda^\ast= - \tau^\ast \log n/ n - \log n /(K \log \log n)$, $\eta^\ast =\|A-\E{A}\| + 2q$.
Then \prettyref{eq:PlanteddenseKKT} and \prettyref{eq:lambda2PDS} still hold, and $D^\ast \ge 0$, $B^\ast \ge 0$ with
probability at least $1 - n^{-\Omega(1)}. $ Therefore the theorem follows by applying \prettyref{lmm:PlanteddenseKKT}.
\end{proof}

\begin{proof}[Proof of \prettyref{thm:PlantedSharpConverse}]
To lower bound the worst-case probability of error, consider the Bayesian setting where the planted cluster $C^\ast$ is uniformly chosen among all $K$-subsets of $[n]$ with $K=\lfloor \rho n \rfloor$.
If $a=b$, then the cluster is unidentifiable from the graph.

Next, we prove the theorem first for the case $a>b$.
If $b =0$, then perfect recovery is possible if and only if the subgraph formed by the vertices in cluster, which is $\calG(K,a \log n/n)$, contains no isolated vertex.\footnote{To be more precise, if there is an isolated vertex in the cluster $C^*$, then the likelihood has at least $n-K$ maximizers, which, in turn, implies that the probability of exact recovery for any estimator is at most $\frac{1}{n-K}$.} This occurs with high probability if $\rho a < 1$ \cite{Erdos59}.

Next we consider $a>b>0$.
Since the prior distribution of $C^\ast$ is uniform, the
\ML estimator minimizes the error probability among all estimators and thus we only need to find when the \ML estimator fails.
Let $e(i, S) \triangleq \sum_{j \in S} A_{ij}$ denote the number of edges between vertex $i$ and vertices in $S \subset [n]$.
Let $F$ denote the event that
$\min_{i \in C^*} e(i, C^\ast) < \max_{j \notin C^*} e(j, C^\ast),$
which implies the existence of $i \in C^\ast $ and $j \notin C^\ast $, such that the set $C^*\backslash\{i\}\cup\{j\}$ achieves a strictly higher likelihood than $C^*$.
Hence $\prob{\text{\ML fails} } \ge \prob{F}$. Next we bound $\prob{F}$ from below.

By symmetry, we can condition on $C^*$ being the first $K$ vertices. Let $T$ denote the set of first $ \lfloor \frac{\rho n}{\log^2 n} \rfloor$ vertices.
Then
\begin{equation}
\min_{i \in C^*} e(i, C^*) \leq \min_{i \in T} e(i, C^*) \leq \min_{i \in T} e(i, C^*\backslash T) + \max_{i \in T} e(i, T).	
	\label{eq:wa}
\end{equation}
Let $E_1,E_2,E_3$ denote the event that $\max_{i \in T} e(i,T) < \frac{\log n}{\log \log n}$, $\min_{i \in T} e(i, C^*\backslash T) + \frac{\log n}{\log \log n} \le \tau^\ast \rho \log n$ and $\max_{j \notin C^*} e(j, C^\ast) \ge \tau^\ast \rho \log n$, respectively.
In view of \prettyref{eq:wa}, we have $F \supset E_1 \cap E_2 \cap E_3$ and hence it boils down to proving that $\prob{E_i} \to 1$ for $i=1,2,3$.

In view of the following Chernoff bound for binomial distributions \cite[Theorem 4.4]{Mitzenmacher05}: For $r \ge 1$ and $X \sim \Binom(n,p)$, $\prob{X \ge r np }
\le ( \eexp/r)^{rnp},$
we have
\begin{align*}
\prob{ e(i, T) \ge \frac{\log n}{\log \log n} } \le   \left( \frac{ \eexp\log^2 n}{ \rho a  \log \log n } \right) ^{- \log  n/ \log \log n} = n^{-2+o(1)}.
\end{align*}
Applying the union bound yields
\begin{align*}
\prob{E_1} \ge 1 - \sum_{i \in T} \prob{ e(i, T) \ge \frac{\log n}{\log \log n} } \ge
1 - n^{-1+o(1)}.
\end{align*}
Moreover,
\begin{align*}
\prob{E_2} &
\overset{(a)}{=} 1- \prod_{i\in T} \prob{ e(i, C^*\backslash T)  > \tau^\ast \rho \log n  - \frac{\log n}{\log \log n}} \\
&  \overset{(b)}{=} 1- \left(
1- n^{-  \rho \left( a - \tau^\ast \log \frac{\eexp a}{\tau^\ast} +o(1) \right)  } \right)^{|T|}
 \overset{(c)}{\ge} 1- \exp \left( - n^{1- \rho \left( a - \tau^\ast \log \frac{\eexp a}{\tau^\ast} \right)+o(1) }\right) \overset{(d)}{\to} 1,
\end{align*}
where $(a)$ holds because $\{e(i, C^*\backslash T)\}_{ i \in T}$ are mutually independent; $(b)$ follows from \prettyref{lmm:binomialmaxminconcentration};
$(c$) is due to $1+x \le e^x$ for all $x \in \reals$;
$(d$) follows from the assumption \prettyref{eq:planteddensesubgraphthresholdconverse} that $\rho f(a,b)  = \rho(a - \tau^\ast \log \frac{\eexp a}{\tau^\ast}) < 1$ .
 Similarly,
\begin{align*}
\prob{E_3} &
= 1- \prod_{j\notin C^*} \prob{e(j, C^\ast) < \tau^\ast \rho \log n}  \\
&= 1- \left(
1- n^{- \left( b - \tau^\ast \log \frac{\eexp b}{\tau^\ast} +o(1) \right) }\right)^{n-K}
 \ge 1- \exp \left( - n^{1- \rho \left( b - \tau^\ast \log \frac{\eexp b}{\tau^\ast} \right)+o(1) }\right) \to 1,
\end{align*}
completing the proof in the case $a>b>0$.

Finally, we prove the theorem for the case $a<b$.
Consider the case $a =0$ first. For $j \notin C^\ast$, since $e(j,C^\ast) \sim \Binom(K, \frac{b \log n}{n})$, it follows that
$
\log \prob{e(j, C^\ast) = 0 } = K \log  \left( 1- \frac{b \log n}{n} \right) = -\left( \rho b +o(1) \right) \log n,
$
and thus
\begin{align*}
\prob{\min_{j \notin C^\ast} e(j, C^\ast) =0 } &=
1- \prod_{j \notin C^\ast} \left( 1- \prob{e(j, C^\ast) = 0 } \right) \\
&= 1- \left( 1- n^{-\rho b + o(1) } \right)^{n-K}
\ge 1- \exp ( -  n^{1 -\rho b + o(1) }) \to 1,
\end{align*}
due to the assumption that $\rho f(0,b)= \rho b <1$. Then with probability tending to one, there exists an isolated vertex $j \notin C^*$, in which case the likelihood has at least $K$ maximizers and the probability of exact recovery for any estimator is at most $\frac{1}{K}$.
Next assume that $0<a<b$.
By symmetry, we condition on $C^*$ being the first $K$ vertices. Let $T$ denote the set of first $ \lfloor \frac{\rho n}{\log^2 n} \rfloor$ vertices.
 Redefine $F,E_2,E_3$ as the event that
$\max_{i \in C^*} e(i, C^\ast) > \min_{j \notin C^*} e(j, C^\ast)$, $\max_{i \in T} e(i, C^*\backslash T) > \tau^\ast \rho \log n$ and $\min_{j \notin C^*} e(j, C^\ast) \le \tau^\ast \rho \log n$, respectively.
Then by the same reasoning $\prob{\text{\ML fails} } \ge \prob{F} \geq \prob{E_2\cap E_3}$.
Applying \prettyref{lmm:binomialmaxminconcentration}, we obtain
\begin{align*}
\prob{E_2} &
= 1- \prod_{i\in T} \prob{ e(i, C^*\backslash T)  \le  \tau^\ast \rho \log n } \\
& = 1- \left(
1- n^{-  \rho \left( a - \tau^\ast \log \frac{\eexp a}{\tau^\ast} +o(1) \right)  } \right)^{|T|}
\ge 1- \exp \left( - n^{1- \rho \left( a - \tau^\ast \log \frac{\eexp a}{\tau^\ast} \right)+o(1) }\right) \to 1,
\end{align*}
and similarly,
\begin{align*}
\prob{E_3} &
= 1- \prod_{j\notin C^*} \prob{e(j, C^\ast) >\tau^\ast \rho \log n}  \\
&= 1- \left(
1- n^{- \left( b - \tau^\ast \log \frac{\eexp b}{\tau^\ast} +o(1) \right) }\right)^{n-K}
 \ge 1- \exp \left( - n^{1- \rho \left( b - \tau^\ast \log \frac{\eexp b}{\tau^\ast} \right)+o(1) }\right) \to 1,
\end{align*}
completing the proof for the case $0<a<b$.
\end{proof}

\begin{appendices}
\section{The sharpness of the condition for \prettyref{thm:adjconcentration}}
\label{app:adj}
Consider the case where $A$ is the adjacency matrix of $\calG(n,p)$.
We show that if $p =o(\frac{\log n}{n})$ and $p=n^{-1+o(1)}$,
then
\begin{equation}
\prob{\| A-\expect{A}\| \geq c \sqrt{ \frac{\log n}{\log \left( \log n / (np) \right) }}} \to 1
\label{eq:adj-anti}
\end{equation}
for some constant $c$ and, consequently,
$\| A-\expect{A}\| / \sqrt{np} \to \infty$ in probability.
To this end, note that $\| A-\expect{A}\| \geq \max_{i \in [n]} \|(A-\expect{A}) e_i\|$, where $\{e_i\}$ denote the standard basis.
Without loss of generality, assume $n$ is even.  Then
by focusing on the upper-right part of $A$, we have
\[
\| A-\expect{A} \|^2 \geq \max_{i \in [n/2] } \sum_{j > n/2} (A_{ij}-p)^2 =  np^2/2 + (1-2p) \max_{i \in [n/2]} \sum_{j > n/2} A_{ij} \geq (1-2p) \max_{i \in [n/2]} X_i,
\]
where $X_i \iiddistr \Binom(n/2,p)$.
Using the inequality $\binom{n}{k} \ge \left( \frac{n}{k} \right)^k$ for $ k \ge 0$
with the convention that $0^0=0$, 
\begin{align*}
\prob{X_1 \geq k} \ge \prob{X_1 = k} = \binom{n/2}{k} p^k (1-p)^{n/2-k} \ge \left( \frac{np}{2k} \right)^k (1- p)^{n/2}.
\end{align*}
Since $\log (1-x) \ge -2x$ for $x \in [0, 1/2]$, it follows that
\begin{align}
-\log \prob{X_1 \geq k}  \le  k \log\left( \frac{2k}{np} \right) - \frac{n}{2} \log (1-p) \le
k \log\left( \frac{2k}{np} \right) + np. \label{eq:binomiallowerbound}
\end{align}
Plugging $k^\ast \triangleq  \lfloor \frac{\log n}{\log \left( \log n / (np) \right) } \rfloor$ into \prettyref{eq:binomiallowerbound}, 
we get that
\begin{align}
-\log \prob{X_1 \geq k^\ast } & \le
\log n + np - \left( \frac{\log n} {\log \left( \log n / (np) \right) } -1 \right) \log \left( \frac{\log \left( \log n / (np) \right)}{2} \right) \\
 & \le \log n-\log \log n, \label{eq:probabilitylowerbound}
\end{align}
where the last inequality follows due to $np=o(k^\ast)$ since $np = o(\log n)$ and  $\log (\log n/ (np) ) = o(\log n)$ since $np=n^{o(1)}$,
By the independence of $\{X_i, i \in [n/2] \}$, we have
\begin{align*}
\prob{ \max_{i \in [n/2]} X_i  < k^\ast} &= \prod_{i=1}^{n/2} \prob{ X_i < k^\ast } = (  1- \prob{ X_1 \ge k^* })^{n/2} \le \exp \left( -\frac{n}{2} \prob{X_1 \geq k^\ast } \right) \le \frac{1}{\sqrt{n}},
\end{align*}
where the last inequality follows in view of \prettyref{eq:probabilitylowerbound}.
\end{appendices}

\bibliographystyle{abbrv}
\bibliography{../../../one_community/graphical_combined}

\end{document}